\documentclass{article}
\usepackage{enumitem}

\usepackage{microtype}
\usepackage{pifont}
\usepackage{graphicx}
\usepackage{subcaption}
\usepackage{booktabs} 

\usepackage{multirow}
\usepackage[table]{xcolor}
\usepackage{colortbl}

\usepackage{hyperref}

\usepackage[preprint]{icml2026}

\usepackage{amsmath}
\usepackage{amssymb}
\usepackage{mathtools}
\usepackage{amsthm}

\usepackage[capitalize,noabbrev]{cleveref}

\usepackage{caption}

\theoremstyle{plain}
\newtheorem{theorem}{Theorem}[section]
\newtheorem{proposition}[theorem]{Proposition}
\newtheorem{lemma}[theorem]{Lemma}
\newtheorem{corollary}[theorem]{Corollary}
\theoremstyle{definition}
\newtheorem{definition}[theorem]{Definition}

\theoremstyle{remark}

\usepackage[textsize=tiny]{todonotes}

\makeatletter
\renewcommand{\icmlauthor}[2]{%
  \ificmlshowauthors
    \mbox{\bf #1}%
    \newif\ificml@firstaffil
    \icml@firstaffiltrue
    \@for\theaffil:=#2\do{%
      \ificml@firstaffil
        \icml@firstaffilfalse
      \else
        \textsuperscript{,}%
      \fi
      \@pa{\theaffil}%
    }%
    \addtofullauthorlist{#1}%
  \else
    \ifdefined\@icmlfirsttime\else
      \gdef\@icmlfirsttime{1}
      \mbox{\bf Anonymous Authors}\@pa{@anon} \addtofullauthorlist{Anonymous Authors}%
    \fi
  \fi
}
\makeatother

\icmltitlerunning{\(\pi\)-StepNFT: Wider Space Needs Finer Steps in Online RL for Flow-based VLAs}

\begin{document}

\twocolumn[
  \icmltitle{\(\pi\)-StepNFT: Wider Space Needs Finer Steps in Online RL for Flow-based VLAs}



  \icmlsetsymbol{equal}{*}
  \icmlsetsymbol{correspondence}{\TextOrMath{\textdagger}{\dagger}}

  \begin{icmlauthorlist}
    \icmlauthor{Siting Wang}{giga,casia,ucas}
    \icmlauthor{Xiaofeng Wang}{giga,thu}
    \icmlauthor{Zheng Zhu}{giga,correspondence}
    \icmlauthor{Minnan Pei}{casia,ucas}
    \icmlauthor{Xinyu Cui}{casia,ucas,zgc}\\
    \icmlauthor{Cheng Deng}{edb}
    \icmlauthor{Jian Zhao}{zgc}
    \icmlauthor{Guan Huang}{giga}
    \icmlauthor{Haifeng Zhang}{casia,ucas}
    \icmlauthor{Jun Wang}{ucl,correspondence}
  \end{icmlauthorlist}

  \icmlaffiliation{giga}{GigaAI}
  \icmlaffiliation{casia}{Institute of Automation, Chinese Academy of Sciences}
  \icmlaffiliation{ucas}{University of Chinese Academy of Sciences}
  \icmlaffiliation{thu}{Tsinghua University}
  \icmlaffiliation{zgc}{Zhongguancun Academy}
  \icmlaffiliation{edb}{The University of Edinburgh}
  \icmlaffiliation{ucl}{University College London}

  \icmlcorrespondingauthor{Zheng Zhu}{zhengzhu@ieee.org}
  

  \vskip 0.3in
]



\printAffiliationsAndNotice{}  

\begin{abstract}
Flow-based vision-language-action (VLA) models excel in embodied control but suffer from intractable likelihoods during multi-step sampling, hindering online reinforcement learning. We propose \textbf{\textit{$\boldsymbol{\pi}$-StepNFT}} (Step-wise Negative-aware Fine-Tuning), a critic-and-likelihood-free framework that requires only a single forward pass per optimization step and eliminates auxiliary value networks. We identify that wider exploration spaces necessitate finer-grained, step-wise guidance for alignment. Empirically, $\pi$-StepNFT unlocks latent potential on LIBERO with competitive few-shot robustness. Moreover, it achieves superior generalization on ManiSkill, outperforming value-based baselines in OOD scenarios by preventing overfitting to multimodal features. This property offers a scalable solution promising for complex real-world applications. Our implementation builds upon RLinf and is publicly available at 
\href{https://wangst0181.github.io/pi-StepNFT/}{https://wangst0181.github.io/pi-StepNFT/}.
\end{abstract}

\begin{figure*}[t]
    \centering
    \includegraphics[width=\textwidth]{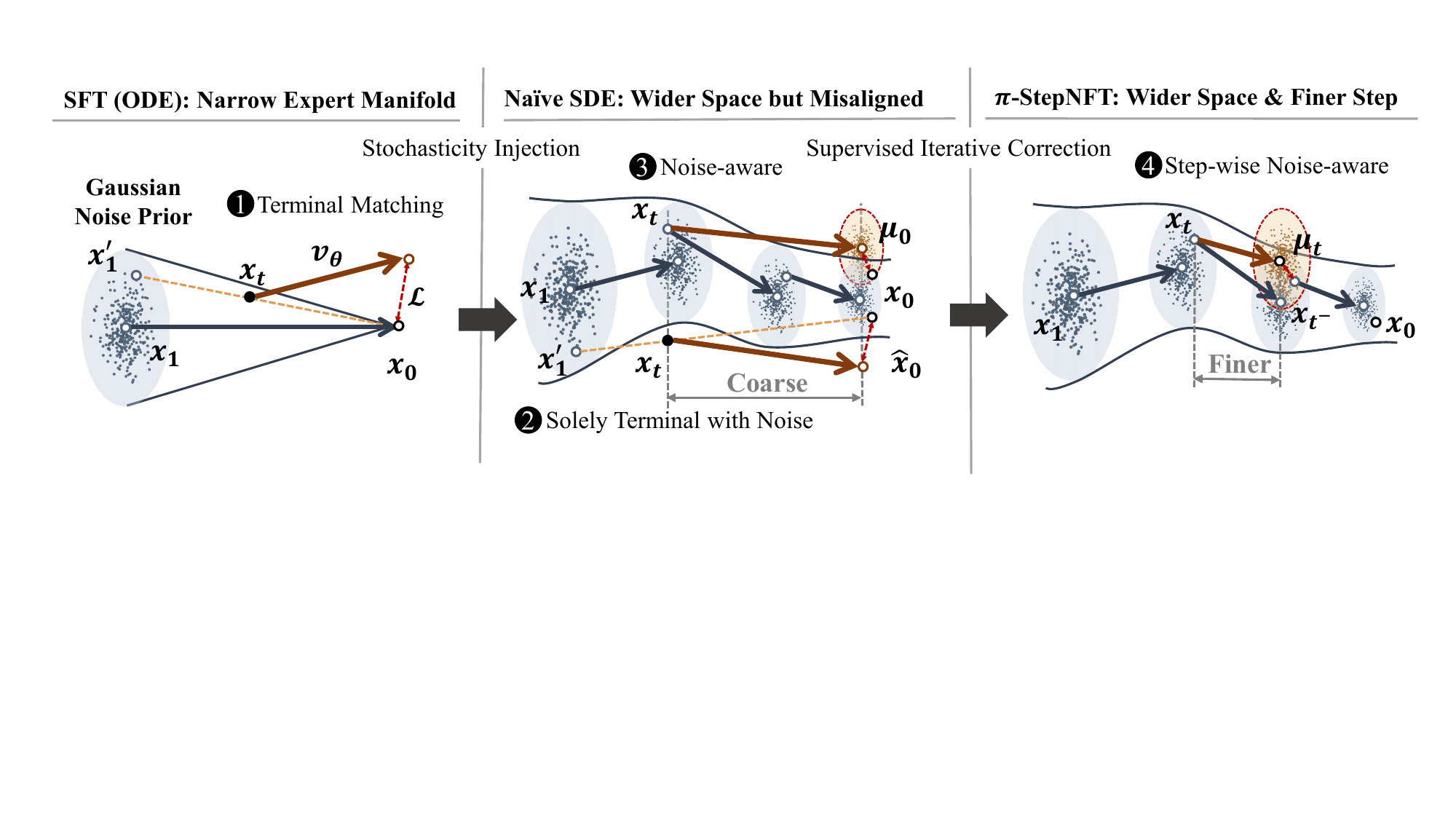}
    \caption{\textbf{Comparison of training paradigms.} 
    \textbf{Left (ODE):} Terminal supervision is well-posed for deterministic ODEs but results in a narrow expert manifold.
    \textbf{Middle (Naive SDE):} Stochastic rollouts introduce a wider exploration space, but coarse terminal supervision fails to correct deviations, leading to misalignment. 
    \textbf{Right ($\pi$-StepNFT):} Our method leverages the wider space from SDE but applies finer, step-wise ranking guidance to ensure robust alignment with the expert manifold.}
    \label{fig:teaser}
\end{figure*}

\vspace{-1em}
\section{Introduction}
\label{introduction}

Vision-language-action (VLA) models have recently demonstrated increasingly general capabilities, enabling robots to follow open-ended natural language instructions and perform complex tasks across diverse environments. While early approaches discretized actions into tokens ~\citep{rt2, openvla} or mapped observations to continuous regression features~\citep{openvlaoft}, recent advances have converged on flow-matching-based policies~\citep{pi0, pi0.5, pi0.6, gigabrain, gr00t}. These models, by integrating large-scale vision-language pretraining with generative action prediction, have established a new standard for complex manipulation tasks.

A recent systematic analysis~\citep{MuchAdoAboutNoising} suggests that after supervised fine-tuning (SFT) converges, the output of flow-based VLAs often collapses to a single mode. Consequently, their efficacy stems from a mechanism of \textit{stochasticity injection} combined with \textit{supervised iterative correction}. Injecting noise during training and iterative correction during inference enable the policy to resist error accumulation and adhere closely to the expert manifold during closed-loop execution.
However, we contend that relying heavily on the density of expert demonstrations, SFT merely establishes a foundational behavioral manifold that resembles a \textit{narrow line}, where the model often lacks the ability to recover once it deviates due to micro-perturbations during testing. To overcome this fragility, reinforcement learning (RL) is required, aiming not to learn from scratch, but to \textit{explore an expanded manifold} around the expert trajectory, endowing the model with the local error-correction capabilities needed to mitigate state deviations. 

However, training flow-based VLAs with RL faces fundamental bottlenecks. The deterministic nature of ordinary differential equation (ODE) trajectories confines the action exploration space entirely to the initial noise distribution. To this end, the multi-step ODE integration renders exact action likelihoods computationally intractable, as calculating gradients requires expensive Jacobian trace estimation or backpropagation through the solver.
Faced with these challenges, existing solutions either bypass likelihoods by employing latent space value distillation~\citep{grrl} or training separate value functions to introduce explicit conditioning on trajectory quality~\citep{pi0.6}, or approximate likelihoods via Gaussian parameterization at each denoising step~\citep{pirl}.
In contrast, Diffusion-NFT~\citep{diffusionnft} offers a likelihood-free alternative from the image generation domain. It optimizes the flow field directly on the forward diffusion process and defines a contrastive improvement direction by splitting samples into positive and negative subsets.

Crucially, directly transferring Diffusion-NFT to embodied control reveals a fundamental domain gap. While the standard deterministic ODE sampling yields a safe but narrow manifold, it lacks the exploratory capacity for self-improvement. Attempting to broaden this scope via stochastic differential equation (SDE) theoretically enables manifold expansion during exploration, yet in practice, it results in a wider space but misaligned policy. This failure stems from the nature of supervision. Unlike image generation that targets static distribution matching, embodied control is sequential and thus sensitive to compounding errors. 
Moreover, embodied control typically uses a short denoising path to meet interaction latency, and prior empirical evidence indicates diminishing returns from moderately longer paths~\citep{pirl}, making fine-grained, step-wise denoising supervision feasible in practice.

As illustrated in Figure~\ref{fig:teaser}, under deterministic ODE rollouts, the intermediate state $x_t$ stays on a narrow trajectory, making \textit{``point-level''} terminal matching \ding{182} on $x_0$ relatively well-posed. However, under SDE rollouts, injected noises accumulate along the denoising path, and naively regressing the final $x_0$ forces unstable point-to-point matching with high-variance gradients. A noise-aware view \ding{183} reveals that each denoising update induces a Gaussian transition, suggesting \textit{``region-level''}  supervision with variance-normalization; yet using only the solely terminal \ding{184} still yields a coarse correction that lags behind on-policy drift. This motivates step-wise supervision \ding{185} that targets the immediate next solver state $x_{t^-}$, providing fine-grained local guidance to stabilize stochastic exploration and accelerate convergence.

To address these issues, we propose \textbf{\textit{$\boldsymbol{\pi}$-StepNFT}}, a critic-and-likelihood-free online RL framework tailored for embodied VLAs. We achieve robust manifold expansion by systematically redesigning the interplay between \textit{exploration} and \textit{supervision}.
First, regarding \textit{lack of exploration}, effective policy improvement necessitates a wider behavioral space. We introduce an SDE-based sampling mechanism during training that augments deterministic rollouts with structured noise, forcing the model to traverse adjacent states to effectively inflate the behavioral manifold.
Second, for \textit{supervision target mismatch}, anchoring this expanded exploration demands finer-grained step-wise supervision. We shift the prediction target from the final denoised output $x_0$ to the immediate next denoising step $x_{t^-}$, using a noise-based regression to generate the precise local gradients needed for robust alignment.
Third, regarding \textit{suppressed successful exploration}, we identify that the previous objective in Diffusion-NFT suffers from an ``implicit penalty'', which inadvertently suppresses policy updates to minimize the magnitude of branch separation. To overcome this, we introduce a logistic contrastive ranking loss, establishing a ``push-pull dynamics'': maximizing the likelihood of successful trajectories while suppressing failed ones. 
This noise-based, step-wise, bidirectional and penalty-free signal enforces strict preference separation, enabling more aggressive and precise policy improvement.

In summary, our main contributions are as follows:

\begin{itemize}[leftmargin=1.2em]
    \item We propose $\pi$-StepNFT, a critic-and-likelihood-free online RL framework tailored for flow-based VLAs, which eliminates the need to train auxiliary value networks that are prone to multimodal overfitting, and requires only a single forward pass per optimization step.
    \item We identify that wider exploration spaces induced by SDE necessitate finer-grained guidance. By shifting the supervision target from the terminal to the immediate next denoising step and incorporating a logistic contrastive ranking objective, we resolve the supervision mismatch and reinforce successful exploration.
    \item We validate our approach through extensive experiments on LIBERO~\citep{libero} and ManiSkill~\citep{maniskill} benchmarks. On LIBERO, $\pi$-StepNFT unlocks the policy's potential in few-shot settings, achieving a \textbf{32.9\% improvement} over SFT. On ManiSkill, it demonstrates superior generalization in visually diverse OOD settings by preventing critic-induced multimodal overfitting, outperforming critic-based baselines by \textbf{11.1\%} in unseen scenarios, highlighting its potential for complex real-world deployment.
\end{itemize}

\section{Related Works}

\subsection{Online RL for VLAs}
Recent VLA models have evolved from discrete tokenization~\citep{rt2, openvla, rl4vla} to continuous flow-based policies~\citep{octo, pi0, pi0.5, pi0.6}, which establish strong priors for manipulation. However, fine-tuning these flow-based VLAs faces the challenge of intractable action likelihoods due to multi-step ODE sampling. 
Existing solutions generally adopt two strategies to circumvent this: bypassing likelihood calculation via value distillation or preference feedback (e.g., GR-RL~\citep{grrl}, $\pi^{*}_{0.6}$), or approximating likelihoods by transforming ODEs into SDEs (e.g., $\pi_\texttt{RL}$~\citep{pirl}). 
Similar to test-time scaling strategies~\citep{taco, hume}, noise injection in SDEs facilitates exploration, yet existing methods often struggle to balance exploration width with supervision granularity.

\subsection{Policy Optimization for Generative Models}
To handle intractable likelihoods in generative policy optimization, prior works typically follow three paradigms. 
Explicit gradient and advantage methods~\citep{ddpo, flowgrpo, reinflow} treat denoising as a sequential process but often require expensive backpropagation through solvers. 
Reward-Weighted methods~\citep{rwfm, ORW-CFM-W2, fpo} avoid exact likelihoods by re-weighting regression targets, yet they can suffer from high variance in gradient estimation. 
Preference and contrastive methods~\citep{diffdpo, lpo} offer a more stable alternative by aligning distributions via ranking. Most notably, Diffusion-NFT~\citep{diffusionnft} proposes a likelihood-free framework using implicit forward-process updates. Our work extends this efficient paradigm to embodied control, addressing the unique supervision gaps that arise when applying it to multi-step VLA policies. Please refer to Appendix~\ref{app:Detailed Related Works} for detailed related works.

\section{Preliminaries}

We use two time scales throughout the paper: environment steps and denoising time. An episode trajectory is indexed by environment steps $i=0,\dots,H-1$, yielding $\tau=\{(s_i,a_i)\}_{i=0}^{H-1}$.
Separately, the flow policy uses a continuous denoising time $t\in[0,1]$ (or a discretization) to generate an action. When discretized, we use $K$ solver steps with schedule $1=t_0>t_1>\dots>t_K=0$, and denote intermediate sampler states by $\{x_{t_j}\}_{j=0}^{K}$, where $K$ is typically small in embodied control due to real-time constraints.
Unless stated otherwise, $t$ refers to denoising time in the sampler, while the environment index is $i$, avoiding ambiguity when both the RL objective and the flow dynamics appear in the same derivations.


\subsection{Flow Matching for VLA Models}
\label{Flow Matching for VLA Models}
We consider a VLA policy that generates continuous actions $x_0 \in \mathbb{R}^d$ conditioned on context $c$. Flow-matching \cite{flow_matching} learns a time-dependent vector field $v_\theta(x, t, c)$ to generate data by transforming a noise distribution $x_1 \sim \mathcal{N}(0, I)$ to the data distribution $x_0$ over time $t \in [0, 1]$.

The standard flow-matching objective regresses the network prediction $v_\theta$ onto the target field $u_t = x_1 - x_0$ by minimizing $\mathcal{L}_{\text{CFM}}(\theta) = \mathbb{E}_{t, x_0, x_1} [ \| v_\theta(x_t, t, c) - u_t \|^2 ]$, where $x_t = t x_1 + (1-t) x_0$.

\textbf{ODE Sampling.}
In the standard deterministic setting, inference is performed by numerically integrating the ODE $dx = v_\theta(x, t, c) dt$ from $t=1$ to $t=0$. Using a discrete step size $\delta_t > 0$, the Euler update rule for the next step $x_{t^-}$ (where $t^- = t - \delta_t$) is:
\begin{equation}
    x_{t^-} = x_t - v_\theta(x_t, t, c) \delta_t.
\end{equation}
While efficient for generation, this deterministic trajectory lacks the exploratory capability required for reinforcement learning.

\textbf{SDE Sampling.}
To enable exploration, we adopt the reverse-time SDE formulation \cite{flowgrpo}, which injects stochasticity while preserving the marginal distribution. The Euler-Maruyama discretized update is given by:
\begin{align}
    x_{t^-} &= x_t + \left[ v_\theta(x_t, t) + \frac{\sigma_t^2}{2t} (x_t + (1-t)v_\theta(x_t, t)) \right](-\delta_t) \nonumber \\
            &\quad + \sigma_t \sqrt{\delta_t} \epsilon,
\end{align}
where $\epsilon \sim \mathcal{N}(0, I)$ provides exploration noise. This update step induces a Gaussian transition density $q_{\theta,t}(x_{t^-} \mid x_t, c) = \mathcal{N}\big(\mu_{\theta,t}(x_t), \Sigma_t\big)$. Crucially, the mean of this transition is an affine transformation of the network output: 
\begin{equation}
    \label{eq:Affine Mean Form}
    \mu_{\theta,t}(x_t, c) = U_t(x_t, t) + B_t(t) v_\theta(x_t, t, c),
\end{equation}
where $U_t$ and $B_t$ are pre-determined coefficients derived from the noise schedule (detailed in Appendix~\ref{app:proof_Affine Mean Derivation}).

This linear relationship allows us to propagate gradients efficiently from the transition target to the policy parameters without backpropagating through the ODE solver.

\subsection{RL Fine-tuning and the Likelihood Gap}

We formulate the fine-tuning task as maximizing the expected return $J(\theta)$ over trajectories 
$\tau = (s_0, a_0, \dots)$: $J(\theta) = \mathbb{E}_{\tau \sim p_\theta(\tau)} [R(\tau)],$
where the trajectory distribution is determined by the environment dynamics and the policy: 
$p_\theta(\tau) = p(s_0) \prod_{i=0}^{H-1}\pi_\theta(a_i|s_i) p(s_{i+1}|s_i, a_i)$.

Standard policy gradient methods rely on the score function:
$\nabla_\theta J(\theta) = \mathbb{E}_{\tau} \left[\sum_i \nabla_\theta \log \pi_\theta(a_i|s_i) \cdot \Psi_i \right]$,
where $\Psi_t$ is the advantage or return (e.g., REINFORCE~\citep{reinforce}, PPO~\citep{ppo}).

However, for flow-based policies, calculating the explicit log-likelihood $\log \pi_\theta(a_i|s_i)$ is computationally expensive and numerically unstable, as it requires integrating the instantaneous change of variables (Jacobian trace) along the entire generation trajectory. This intractability prevents the direct application of standard RL algorithms, motivating our likelihood-free approach.

\section{Method}


{\setlength{\textfloatsep}{4pt plus 1pt minus 2pt}
\begin{algorithm}[tb]
  \caption{$\pi$-StepNFT: Step-wise Negative-aware Fine-Tuning with Contrastive Ranking}
  \label{alg:stepnft}
  \begin{algorithmic}
    \STATE {\bfseries Require:} Flow policy $\pi_{\theta^{\text{old}}}$, simulator $\mathcal{E}$, env steps $H$, solver steps $K$, schedule $\{t_j\}_{j=0}^K$, hyperparams $\beta,\lambda_{\text{TR}}$.
    \STATE Initialize $\theta \leftarrow \theta^{\text{old}}$, buffer $\mathcal{D}\leftarrow\emptyset$.

    \FOR{each iteration $m$}
      \STATE \textcolor{gray}{// Phase 1: Data Collection}
      \FOR{each task $(\text{initial state }s_0, \text{language prompt }c)$}
        \STATE \textcolor{gray}{Rollout $H$ env steps using $\pi_{\theta^{\text{old}}}$.}
        \FOR{$i=0$ {\bfseries to} $H-1$}
          \STATE Run Flow-SDE sampler; get chain $\{x_{t_j}\}_{j=0}^K$.
          \STATE Sample $j \sim \mathcal{U}\{0,\dots,K-1\}$; set $t\leftarrow t_j$.
          \STATE Set $x_t\leftarrow x_{t_j}$ and $x_{t^-}\leftarrow x_{t_{j+1}}$.
          \STATE Set $v^{\text{old}}_t\leftarrow \pi_{\theta^{\text{old}}}(c,s_i,x_t,t)$.
          \STATE Record $d_i=(x_t,x_{t^-},v^{\text{old}}_t,t,s_i,c)$.
          \STATE Execute $x_{t_K}$ in $\mathcal{E}$ and update $s_i$
        \ENDFOR
        \STATE Observe terminal $r\in\{0,1\}$; $\mathcal{D}\leftarrow\{(d_i,r)\}_{i=0}^{H-1}$.
      \ENDFOR

      \STATE \textcolor{gray}{// Phase 2: Optimization}
      \FOR{each batch $(x_t,x_{t^-},v^{\text{old}}_t,t,s,c,r)\sim\mathcal{D}$}
        \STATE Pred $v_{\theta,t} \leftarrow \pi_\theta(c, s, x_t, t)$; 
               Drift $\Delta v_\theta \leftarrow v_{\theta,t} - v^{\text{old}}_t$.
        \STATE Construct mirrors: $v^\pm_\theta \leftarrow v^{\text{old}}_t \pm \beta \Delta v_\theta$.
        \STATE Calc means/var: $\mu_{\theta,t}^\pm,\Sigma_t \leftarrow \text{Mean\_Var}(x_t, v^\pm_\theta, t)$. 
        \STATE Calc errors: $E^\pm_{\theta,t} \leftarrow \| x_{t^-} - \mu_{\theta,t}^\pm \|^2_{\Sigma_t^{-1}}$.
        
        \STATE Set $y \leftarrow 2r-1$ and $\Delta E_{\theta} \leftarrow E^+_{\theta,t} - E^-_{\theta,t}$.
        \STATE $\mathcal{L}_{\text{total}} \leftarrow \text{softplus}\left( \frac{1}{2} y \Delta E_{\theta} \right) + \lambda_{\text{TR}} \| \Delta v_{\theta} \|^2$.
        \STATE Update $\theta \leftarrow \theta - \eta \nabla_\theta \mathcal{L}_{\text{total}}$.
      \ENDFOR

      \STATE $\theta^{\text{old}} \leftarrow \alpha_m\theta^{\text{old}}+(1-\alpha_m)\theta$; clear $\mathcal{D}$.
    \ENDFOR
    \STATE {\bfseries Output:} Optimized policy $\pi_\theta$.
  \end{algorithmic}
\end{algorithm}}

We introduce $\pi$-StepNFT (Step-wise Negative-aware Fine-Tuning), an online RL framework designed for flow-based VLA models, as shown in Algorithm~\ref{alg:stepnft}. Our method is inspired by Diffusion-NFT \cite{diffusionnft}, which fine-tunes diffusion models using a weighted-MSE objective on the final denoised output $x_0$ generated by ODE rollouts.

\textbf{Wider Space Needs Finer Steps.} While effective for image generation, we observe that directly transferring the ODE-based formulation to embodied control yields suboptimal convergence. We attribute this domain gap to two critical factors, requiring us to establish a \textit{Wider Space} for exploration anchored by \textit{Finer Steps} of supervision (made practical by the typically short denoising path in embodied control):

\begin{itemize}[leftmargin=1.2em]
    \item \textbf{Lack of Exploration:} Deterministic ODE rollouts quickly collapse to a narrow manifold, failing to discover diverse solutions in high-dimensional action spaces. We instead adopt an SDE-based formulation to inject controlled noise. This active expansion creates the necessary \textit{wider space} for the policy to traverse and learn from adjacent regions around the expert trajectory.
    \item \textbf{Supervision Target Mismatch:} Operating within this wider space renders standard terminal-$x_0$ supervision unstable, as injected noises accumulate and amplify variance over the rollout horizon. To counteract this, we require \textit{finer steps} of guidance: we supervise the immediate one-step transition $x_t\to x_{t^-}$ with variance normalization, providing the precise, low-variance local gradients needed for robust alignment.
\end{itemize}

\subsection{Step-wise Transitions and Mirror Errors}
\label{sec:definition}
We conduct rollouts using the Flow-SDE solver described in Section~\ref{Flow Matching for VLA Models} with a rollout policy $\pi_{\theta^{\text{old}}}$ which is updated with EMA across iterations. Each episode yields an environment trajectory $\tau=\{(s_i,a_i)\}_{i=0}^{H-1}$. At each environment step $i$, the policy generates $a_i$ by running a $K$-step solver with schedule $1=t_0>t_1>\cdots>t_K=0$, producing sampler states $\{x_{t_j}^{(i)}\}_{j=0}^{K}$; we execute the chunked terminal sample $x^{(i)}{t_K}$ in the simulator as $a_i$. For efficiency, we uniformly sample one solver index $j\sim\mathcal{U}\{0,\dots,K-1\}$ and define a single solver transition $(x_t,x_{t^-},t)=(x_{t_j}^{(i)},x_{t_{j+1}}^{(i)},t_j)$, where $t^-$ denotes the next solver time point in the discretization. We additionally record the rollout velocity $v^{\text{old}}=\pi_{\theta}^{\text{old}}(c,x_t,t)$. Each episode also provides a terminal optimality signal $r(\tau)\in[0,1]$.

Following Diffusion-NFT, we construct two “mirrored” velocity candidates $v^{+}{\theta}$ and $v^{-}{\theta}$, symmetric around $v^{\text{old}}$ along the update direction $\Delta v_{\theta}=v_{\theta}-v^{\text{old}}$:
\begin{align}
    v_\theta^+ &= (1-\beta)v^{\text{old}} + \beta v_\theta, \\
    v_\theta^- &= (1+\beta)v^{\text{old}} - \beta v_\theta,
\end{align}
where $\beta>0$ is a trust-region hyperparameter controlling how far we deviate from the rollout policy to estimate a local improvement signal. This construction ensures symmetry:
$v_\theta^+ - v^{\text{old}} = v^{\text{old}} - v_\theta^- = \beta\Delta v_{\theta}$.

Importantly, under the Flow-SDE transition Eq.~\eqref{eq:Affine Mean Form}, the one-step mean is an affine function of the velocity, so these two velocity candidates induce two Gaussian transition means $\mu^{\pm}{\theta,t}=\mu_t(v^{\pm}{\theta})$ with shared covariance $\Sigma_t$. We then compute the variance-normalized step errors against the sampled next state $x_{t^-}$:
\begin{equation}
    E_{\theta,t}^+ = \| x_{t^-} - \mu_{\theta,t}^+ \|^2_{\Sigma_t^{-1}}, \quad
    E_{\theta,t}^- = \| x_{t^-} - \mu_{\theta,t}^- \|^2_{\Sigma_t^{-1}}.
\end{equation}
Intuitively, $E^{+}_{\theta,t}$ measures how well the positive mirrored branch explains the observed stochastic transition, while $E^{-}_{\theta,t}$ measures the negative branch. Normalizing by $\Sigma_t$ (which reflects the injected noise level at solver time $t$) stabilizes gradient scales across timesteps.
We next show how this yields a step-wise contrastive objective over mirrored perturbations.

\subsection{$\pi$-StepNFT: Step-wise Contrastive Objective}
\label{sec:objective}
Given a sampled solver transition $(x_t \rightarrow x_{t^-})$ with terminal signal $r(\tau)$, we define $y = 2r(\tau)-1$. We use \textit{oracle} to denote the ideal, outcome-conditioned comparison between $p(x_{t^-}\mid x_t, c, o{=}1)$ and $p(x_{t^-}\mid x_t, c, o{=}0)$, which is not directly observable. Our method replaces this with a computable ranking surrogate: we construct two \textbf{\textit{symmetric perturbations}} around the rollout policy along the update direction and rank the two branches by which one assigns higher likelihood to the observed transition.

\begin{definition}[$\pi$-StepNFT Objective]
\label{def:stepnft_obj}
For a sampled solver transition tuple $(x_t,x_{t^-},t,c)$ with episode label $y\in[-1,1]$,
let $E_{\theta,t}^+$ and $E_{\theta,t}^-$ denote the step-wise errors defined in Section~\ref{sec:definition}.
The step-level objective is:
\begin{equation}
    \ell_t(\theta) = \text{softplus}\Big( \dfrac{1}{2}y \cdot (E_{\theta,t}^+ - E_{\theta,t}^-) \Big).
\end{equation}
\end{definition}

Minimizing $\ell_t$ encourages $E_{\theta,t}^+ < E_{\theta,t}^-$ when the episode is successful ($y>0$) and reverses the inequality for failures ($y<0$).
Intuitively, this ranks two local transition hypotheses along the update direction $\Delta v_\theta=v_\theta-v^{\text{old}}$, using the episode label as a weak preference signal.

\begin{lemma}[Log-Likelihood Ratio]
\label{lem:Log-Likelihood Ratio}
Under the shared covariance $\Sigma_t$, the difference in squared errors is proportional to the log-likelihood ratio of the two mirrored branches:
\begin{equation}
    \log \frac{q_{\theta,t}^+(x_{t^-} \mid x_t, c)}{q_{\theta,t}^-(x_{t^-} \mid x_t, c)} 
    = -\frac{1}{2} (E_{\theta,t}^+ - E_{\theta,t}^-).
\end{equation}
\end{lemma}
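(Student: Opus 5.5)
The plan is to write out both Gaussian densities explicitly and observe that everything except the quadratic forms cancels in the ratio. By Eq.~\eqref{eq:Affine Mean Form}, each mirrored velocity $v_\theta^\pm$ induces a Flow-SDE transition
\[
q^\pm_{\theta,t}(x_{t^-}\mid x_t,c)=\mathcal{N}\big(x_{t^-};\,\mu^\pm_{\theta,t},\,\Sigma_t\big),
\qquad
\mu^\pm_{\theta,t}=U_t(x_t,t)+B_t(t)\,v^\pm_\theta .
\]
The first thing to establish is that the covariance is genuinely shared. In the Euler--Maruyama update, the noise term $\sigma_t\sqrt{\delta_t}\,\epsilon$ does not depend on the velocity. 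Hence $\Sigma_t=\sigma_t^2\delta_t I$, or more generally a schedule-only matrix, and it is the same for both branches. Only the mean depends on $v^\pm_\theta$, which is the point of the affine form.

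Next, I would write the log-density of each branch:
\[
\log q^\pm_{\theta,t}=-\tfrac{d}{2}\log(2\pi)-\tfrac12\log\det\Sigma_t-\tfrac12\,(x_{t^-}-\mu^\pm_{\theta,t})^\top\Sigma_t^{-1}(x_{t^-}-\mu^\pm_{\theta,t}).
\]
I would then identify the last quadratic form with $E^\pm_{\theta,t}=\|x_{t^-}-\mu^\pm_{\theta,t}\|^2_{\Sigma_t^{-1}}$ from Section~\ref{sec:definition}. Subtracting the two expressions, the normalizing constants $-\tfrac d2\log(2\pi)$ and $-\tfrac12\log\det\Sigma_t$ cancel exactly because $\Sigma_t$ is shared. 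This leaves $-\tfrac12(E^+_{\theta,t}-E^-_{\theta,t})$, which is the claimed identity.

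There is no analytic obstacle. The only step needing care is justifying that $\Sigma_t$ is independent of $\theta$ and of the branch. The affine-mean derivation referenced in the appendix must place all $v_\theta$-dependence in $B_t v_\theta$ and none in the diffusion coefficient. Some care is also needed at endpoints such as $t=1$, where the drift coefficient $\sigma_t^2/(2t)$ must be finite, and at steps where $\sigma_t>0$ is required for $\Sigma_t^{-1}$ to exist. I would therefore state the lemma for solver steps with $\Sigma_t\succ 0$. If the appendix's schedule has $\sigma_t=0$ at the last step, I would handle that by restriction or by a limiting argument.

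As a sanity check, I would optionally expand the difference of quadratics using $\mu^\pm=\mu^{\text{old}}\pm\beta B_t\Delta v_\theta$. This gives
\[
E^+-E^-=-4\beta\,(x_{t^-}-\mu^{\text{old}})^\top\Sigma_t^{-1}B_t\Delta v_\theta ,
\]
which is linear in the drift, though this expansion is not needed for the lemma itself.
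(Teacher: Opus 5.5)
Your proposal is correct and follows the paper's proof essentially verbatim: write out the two Gaussian log-densities with the shared covariance $\Sigma_t$, note that the normalization term $-\tfrac12\log\det(2\pi\Sigma_t)$ cancels in the difference, and identify the remaining quadratic forms with $E^\pm_{\theta,t}$. Your additional remarks are sound points the paper leaves implicit: that $\Sigma_t=\sigma_t^2\delta_t I$ does not depend on the branch, that $\sigma_t>0$ is needed for $\Sigma_t^{-1}$ to exist, and that $E^+-E^-=-4\beta\,(x_{t^-}-\mu^{\text{old}})^\top\Sigma_t^{-1}B_t\Delta v_\theta$, which agrees with part (a) of the paper's gradient-form theorem.
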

\textit{Proof.} See Appendix~\ref{app:proof_Log-Likelihood Ratio}.

Lemma~\ref{lem:Log-Likelihood Ratio} shows that minimizing $\ell_t(\theta)$ adjusts the constructed transition log-ratio $\log\!\big(\tfrac{q_\theta^+}{q_\theta^-}\big)$ according to the episode label $y$.
The two densities $q_\theta^\pm$ arise from symmetric perturbations $\pm\beta\Delta v_\theta$ of the rollout policy, so this log-ratio provides a directional signal on the same observed solver transition $(x_t\!\to x_{t^-})$.
In contrast, DPO~\citep{dpo} ranks \emph{outcome-conditioned} distributions under a shared context, whereas we rank \emph{update perturbations} using only episode-level feedback.
For $y=+1$, minimizing the loss increases the log-ratio, favoring updates that make the observed transition more likely under the positive perturbation; for $y=-1$, it encourages the opposite preference.
Thus, $\ell_t$ yields a low-variance step-wise surrogate, and in Section~\ref{sec:validity} we show that under small-step assumptions its induced updates align with the oracle improvement direction from outcome-conditioned posterior splits.

\subsection{Validity and Optimized Direction}
\label{sec:validity}

This section closes the conceptual loop between our constructed step-wise objective and the oracle improvement signal.

\paragraph{Oracle direction from posterior splits (not directly computable).}
Let $o\in{0,1}$ denote the latent episode outcome under context $c$. Posterior splits (Appendix~\ref{app:oracle_splits}) induce an outcome-conditioned decomposition of the rollout posterior at solver time $t$, which defines the oracle mean gap $\Delta\mu_t^\star(x_t,c)$ (Lemma~\ref{lem:oracle_velocity_mean_split}). Intuitively, $\Delta\mu_t^\star$ captures how the one-step transition mean would change if we could condition the rollout transition on success versus failure; we treat it as an ideal local improvement direction in mean space. Importantly, this oracle quantity is a reference defined under outcome conditioning, and is not directly observable from online rollouts.

\begin{proposition}[Bayes Monotonicity]
\label{prop:Bayes Monotonicity}
For fixed $(x_t,c)$, the posterior $\mathbb P(o=1\mid x_{t^-},x_t,c)$ is strictly increasing in the oracle likelihood ratio
$
\frac{p(x_{t^-}\mid x_t,c,o=1)}{p(x_{t^-}\mid x_t,c,o=0)}.
$
\end{proposition}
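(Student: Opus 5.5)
The plan is a direct application of Bayes' rule, written in odds form. Fix $(x_t,c)$. Set
\[
\pi_1 := \mathbb P(o=1\mid x_t,c), \qquad \pi_0 := 1-\pi_1,
\]
\[
\Lambda := \frac{p(x_{t^-}\mid x_t,c,o=1)}{p(x_{t^-}\mid x_t,c,o=0)}.
\]
The priors $\pi_0,\pi_1$ depend only on $(x_t,c)$ and not on $x_{t^-}$.

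Bayes' rule gives
\[
\mathbb P(o=1\mid x_{t^-},x_t,c)=\frac{\pi_1\, p(x_{t^-}\mid x_t,c,o=1)}{\pi_1\, p(x_{t^-}\mid x_t,c,o=1)+\pi_0\, p(x_{t^-}\mid x_t,c,o=0)}.
\]
Dividing the numerator and denominator by $\pi_0\,p(x_{t^-}\mid x_t,c,o=0)$ turns this into
\[
\mathbb P(o=1\mid x_{t^-},x_t,c)=\frac{\kappa\Lambda}{1+\kappa\Lambda}=\sigma\big(\log\kappa+\log\Lambda\big),
\qquad \kappa:=\frac{\pi_1}{\pi_0},
\]
where $\sigma$ is the logistic function. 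Since $\kappa$ is fixed once $(x_t,c)$ is fixed, the posterior is a function of $\Lambda$ alone. The map $\Lambda\mapsto \kappa\Lambda/(1+\kappa\Lambda)$ has derivative $\kappa/(1+\kappa\Lambda)^2>0$ on $\Lambda\in(0,\infty)$, so it is strictly increasing. This is exactly the claim. The logistic form is also worth recording because it matches the softplus/logistic structure of $\ell_t$ in Definition~\ref{def:stepnft_obj}, and that connection is presumably what the subsequent alignment argument uses.

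The step I expect to need the most care is the non-degeneracy setup, not the algebra. Strict monotonicity requires $\kappa\in(0,\infty)$, so both outcomes must have positive prior probability given $(x_t,c)$; I would state $0<\mathbb P(o=1\mid x_t,c)<1$ explicitly as an assumption. I would also require the outcome-conditioned densities to be positive and finite, so that $\Lambda\in(0,\infty)$ is well defined. This holds automatically here, because under the Flow-SDE each transition is Gaussian with full-rank $\Sigma_t$, and each outcome-conditioned density is a mixture of such Gaussians via the posterior splits of Appendix~\ref{app:oracle_splits}. Finally, I would note that the statement concerns monotonicity in $\Lambda$ for fixed $(x_t,c)$, so the priors $\pi_0,\pi_1$ are treated as constants throughout.
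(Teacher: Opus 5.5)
Your proposal is correct and is essentially the paper's proof: apply Bayes' rule, divide numerator and denominator by the $o=0$ likelihood to write the posterior as $\pi\Lambda/(\pi\Lambda+1-\pi)$ (your $\kappa\Lambda/(1+\kappa\Lambda)$), then check that the derivative in $\Lambda$ is strictly positive, under the same assumption $0<\mathbb P(o=1\mid x_t,c)<1$ that the paper imposes. Your requirement that the conditioned densities be positive and finite makes explicit what the paper takes for granted when it divides by $p(x_{t^-}\mid x_t,c,o=0)$. It is safer to state this as an assumption than to claim it follows from the Gaussian transition structure, since conditioning on $o$ reweights the Gaussian by $\mathbb P(o=1\mid x_{t^-},x_t,c)$, which could vanish.
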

\textit{Proof.} See Appendix~\ref{app:proof_Bayes Monotonicity}.

Proposition 4.3 provides the key monotonic link: increasing the oracle transition ratio on the observed transition $(x_t\rightarrow x_{t^-})$ strictly increases the posterior probability of success. This motivates seeking a step-wise objective that increases a success-vs-failure transition ratio, while acknowledging that the oracle-conditioned densities $p(\cdot\mid o)$ are inaccessible.

\paragraph{Computable surrogate via mirrored transitions (what we actually optimize).}
Since the oracle success–failure ratio $\frac{p(· | o=1)}{p(· | o=0) }$ is not observable online, we optimize a constructed step-wise surrogate based on the mirrored perturbations defined in Section~\ref{sec:definition}. By Lemma~\ref{lem:Log-Likelihood Ratio}, our ranking loss is equivalent to increasing (for $y>0$) or decreasing (for $y<0$) the constructed transition ratio $\frac{q_\theta^+}{q_\theta^-}$ evaluated on the observed transition $(x_t\rightarrow x_{t^-})$. We next show that, under a small-step regime, the expected gradient induced by this constructed ratio aligns with the oracle mean-gap direction.

\begin{theorem}[Gradient Form and Small-Step Alignment]
\label{thm:Gradient Form and Small-Step Alignment}
Let $e_t=x_{t^-}-\mu_t^{\mathrm{old}}$ be the residual of the rollout mean,
$d_t=\mu_{\theta,t}^+ - \mu^{\text{old}}_t$ be the displacement in mean space
and $B_t$ be the affine coefficient from Eq.~\eqref{eq:Affine Mean Form}.
\begin{enumerate}[leftmargin=1.6em]
    \item[(a)] The error difference satisfies
    \begin{equation}
        E_{\theta,t}^+ - E_{\theta,t}^- = -4 \langle \Sigma_t^{-1} e_t, d_t \rangle.
    \end{equation}
    \item[(b)] Consequently, the gradient of the step loss $\ell_t$ is
    \begin{equation}
        -\nabla_\theta \ell_t(\theta) \propto \sigma(z_t)\, y\,
        \Big( \frac{\partial v_\theta}{\partial \theta} \Big)^\top B_t \Sigma_t^{-1} e_t,
    \end{equation}
    where $z_t$ is the softplus logit and $\sigma(\cdot)$ is the sigmoid.
    \item[(c)] In the binary-success setting and for small updates ($v_\theta\approx v^{\mathrm{old}}$ so $\sigma(z_t)\approx \mathrm{const}$),
    the conditional expected direction aligns with the oracle mean gap:
    \begin{equation}
    \mathbb E[-\nabla_\theta \ell_t(\theta)\mid x_t,c]
    \parallel
    \Big(\frac{\partial v_\theta}{\partial\theta}\Big)^\top
    B_t\Sigma_t^{-1}\Delta\mu_t^\star(x_t,c),
    \end{equation}
    where $\Delta\mu_t^\star$ is defined by posterior splits in Appendix~\ref{app:oracle_splits}.
\end{enumerate}
\end{theorem}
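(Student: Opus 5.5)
Part (a) is a direct algebraic computation that uses the affine structure of Eq.~\eqref{eq:Affine Mean Form}. Both mirrored velocities share $x_t$ and $t$, so both means share the offset $U_t$. This gives $\mu^\pm_{\theta,t}=\mu^{\mathrm{old}}_t \pm B_t\beta\Delta v_\theta$, that is, $\mu^+_{\theta,t}=\mu^{\mathrm{old}}_t+d_t$ and $\mu^-_{\theta,t}=\mu^{\mathrm{old}}_t-d_t$ with $d_t=\beta B_t\Delta v_\theta$. Writing $x_{t^-}-\mu^\pm_{\theta,t}=e_t\mp d_t$ and expanding the two $\Sigma_t^{-1}$-weighted quadratics, the $\|e_t\|^2_{\Sigma_t^{-1}}$ and $\|d_t\|^2_{\Sigma_t^{-1}}$ terms cancel exactly. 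Only the cross term remains, $-2\langle\Sigma_t^{-1}e_t,d_t\rangle-2\langle\Sigma_t^{-1}e_t,d_t\rangle=-4\langle\Sigma_t^{-1}e_t,d_t\rangle$, since $\Sigma_t$ is symmetric. This identity also shows why the objective carries no implicit penalty on $\|d_t\|$: the error difference is linear in the displacement.

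For (b), set $z_t=\tfrac12 y(E^+_{\theta,t}-E^-_{\theta,t})=-2y\langle\Sigma_t^{-1}e_t,d_t\rangle$ and differentiate $\ell_t=\mathrm{softplus}(z_t)$, whose derivative is $\sigma(z_t)\nabla_\theta z_t$. The quantities $e_t$, $v^{\mathrm{old}}$, $\Sigma_t$ and $B_t$ are all independent of $\theta$, because they come from the stored rollout and the schedule. Hence $\nabla_\theta d_t=\beta B_t\,\partial v_\theta/\partial\theta$, and
\[
-\nabla_\theta\ell_t=2\beta\,\sigma(z_t)\,y\,\Big(\frac{\partial v_\theta}{\partial\theta}\Big)^\top B_t^\top\Sigma_t^{-1}e_t .
\]
$B_t$ is a scalar multiple of the identity in the Flow-SDE update, so $B_t^\top=B_t$. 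The factor $2\beta>0$ is absorbed into the proportionality.

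For (c), I condition on $(x_t,c)$ and take the expectation over the sampled transition $x_{t^-}$ and the outcome $o$, with $y=2o-1$. In the small-update regime I treat $\sigma(z_t)$ as a constant, so $z_t\approx0$ and $\sigma\approx 1/2$. The Jacobian $\partial v_\theta/\partial\theta$ depends only on $(x_t,c)$, so it factors out together with $B_t\Sigma_t^{-1}$. What remains is $\mathbb E[y\,e_t\mid x_t,c]$. Let $p_1=\mathbb P(o=1\mid x_t,c)$ and let $\mu^{(o)}_t$ be the outcome-conditioned transition means from the posterior splits of Appendix~\ref{app:oracle_splits}, given by Lemma~\ref{lem:oracle_velocity_mean_split}. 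The mean $\mu^{\mathrm{old}}_t$ is the mixture $p_1\mu^{(1)}_t+(1-p_1)\mu^{(0)}_t$. Therefore
\[
\mathbb E[y e_t\mid x_t,c]=p_1(\mu^{(1)}_t-\mu^{\mathrm{old}}_t)-(1-p_1)(\mu^{(0)}_t-\mu^{\mathrm{old}}_t)=2p_1(1-p_1)\,\Delta\mu^\star_t ,
\]
where $\Delta\mu^\star_t=\mu^{(1)}_t-\mu^{(0)}_t$. The coefficient is nonnegative, and strictly positive whenever both outcomes have positive probability, which gives the stated parallelism.

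The main obstacle is (c), specifically justifying that $\sigma(z_t)$ can be pulled out of the expectation. In general $z_t$ correlates with $y e_t$, so the step needs a first-order argument. I would expand $\sigma(z_t)=\tfrac12+O(\|d_t\|)$. The correction term then contributes only $O(\beta\|\Delta v_\theta\|)$ to the expected direction, which vanishes as $v_\theta\to v^{\mathrm{old}}$. I would also check that the definition of $\Delta\mu^\star_t$ in Lemma~\ref{lem:oracle_velocity_mean_split} matches the mixture decomposition used above, in particular that the rollout mean is exactly the $p_1$-weighted average of the two conditioned means.
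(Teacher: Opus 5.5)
Your proposal is correct and follows the paper's proof essentially step for step. For (a) you use the mean-shift identity $\mu^\pm_{\theta,t}=\mu^{\mathrm{old}}_t\pm d_t$ with $d_t=\beta B_t\Delta v_\theta$, for (b) the softplus chain rule with $e_t$ and $v^{\mathrm{old}}$ held fixed, and for (c) the reduction to $\mathbb E[y e_t\mid x_t,c]=2\alpha(1-\alpha)\Delta\mu^\star_t$ via the mixture identity. The paper first uses $\mathbb E[e_t\mid x_t,c]=0$ to reduce to $2\,\mathbb E[o\,e_t\mid x_t,c]$, which is equivalent to your two-outcome split.

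Your first-order treatment of the correlation between $\sigma(z_t)$ and $y e_t$ is more careful than the paper, which simply asserts $\sigma(z_t)\approx\mathrm{const}$. The identification of $\mathbb E[x_{t^-}\mid x_t,c,o]$ with the posterior-split means, which you flag as needing a check, is also made by definition in the paper rather than proved.
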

\textit{Proof.} See Appendix~\ref{app:proof_Gradient Form and Alignment}.

Theorem~\ref{thm:Gradient Form and Small-Step Alignment} provides the missing ``closed loop'':
posterior splits define an \emph{oracle} local improvement signal $\Delta\mu_t^\star$,
while our mirrored construction yields a \emph{computable} surrogate ratio whose small-step expected gradient
provably points in the same mean-space direction.

\subsection{Comparison with Diffusion-NFT (Weighted-MSE)}
Diffusion-NFT optimizes a reward-weighted regression objective. In our step-wise setting, the analogous form is
$\ell^{\text{wMSE}}_t(\theta)= r\,E^{+}_{\theta,t} + (1-r)\,E^{-}_{\theta,t}$,
where $E^{\pm}_{\theta,t}$ are the mirrored step errors from Section~\ref{sec:definition}.
We next show that this weighted-MSE contains an \emph{implicit separation penalty} that can suppress branch separation (and hence policy updates), whereas our logistic ranking objective isolates the directional alignment signal and induces a clearer push--pull behavior.

\begin{theorem}[Separation Penalty in wMSE]
\label{thm:Separation Penalty in wMSE}
The wMSE loss decomposes as:
\begin{equation}
    \ell_t^{\text{wMSE}}(\theta) = \text{const} - 2y \langle \Sigma_t^{-1} e_t, d_t \rangle + \| d_t \|^2_{\Sigma_t^{-1}}.
\end{equation}
\end{theorem}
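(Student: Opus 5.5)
The decomposition follows from writing both mirrored means as symmetric displacements of the rollout mean and then expanding the two squared Mahalanobis norms. \textbf{Step 1: express $\mu^{\pm}_{\theta,t}$ relative to $\mu^{\text{old}}_t$.} By Eq.~\eqref{eq:Affine Mean Form}, the one-step mean is affine in the velocity with coefficients $U_t,B_t$ that do not depend on $\theta$. Hence
\[
\mu^{\pm}_{\theta,t}=U_t+B_t v^{\pm}_\theta=\mu^{\text{old}}_t\pm\beta B_t\Delta v_\theta .
\]
With $d_t=\mu^+_{\theta,t}-\mu^{\text{old}}_t=\beta B_t\Delta v_\theta$, this gives $\mu^{\pm}_{\theta,t}=\mu^{\text{old}}_t\pm d_t$. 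This is the same symmetry already used in Theorem~\ref{thm:Gradient Form and Small-Step Alignment}(a).

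\textbf{Step 2: expand the two errors.} Write $x_{t^-}-\mu^{\pm}_{\theta,t}=e_t\mp d_t$. Expanding in the $\Sigma_t^{-1}$ inner product, and using the symmetry of $\Sigma_t^{-1}$, gives
\[
E^{\pm}_{\theta,t}=\|e_t\|^2_{\Sigma_t^{-1}}\mp 2\langle\Sigma_t^{-1}e_t,d_t\rangle+\|d_t\|^2_{\Sigma_t^{-1}}.
\]
Taking the difference of these two lines recovers part (a) of Theorem~\ref{thm:Gradient Form and Small-Step Alignment}, which serves as a consistency check.

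\textbf{Step 3: form the weighted combination.} Substitute into $\ell^{\text{wMSE}}_t=r\,E^+_{\theta,t}+(1-r)\,E^-_{\theta,t}$. The weights $r$ and $1-r$ sum to one, so the terms that are even in $d_t$ come out with unit coefficient. The cross term picks up the coefficient $-2\big(r-(1-r)\big)=-2(2r-1)=-2y$. The result is
\[
\ell^{\text{wMSE}}_t=\|e_t\|^2_{\Sigma_t^{-1}}-2y\langle\Sigma_t^{-1}e_t,d_t\rangle+\|d_t\|^2_{\Sigma_t^{-1}}.
\]

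\textbf{Step 4: justify the constant.} The first term is the ``const'' in the statement. It depends only on the recorded $x_{t^-}$, the rollout velocity $v^{\text{old}}_t$ (through $\mu^{\text{old}}_t$), and the schedule-determined $\Sigma_t$. None of these depends on $\theta$, so it is constant for optimization.

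The only point needing care is Step 1: confirming that $U_t$, $B_t$ and $\Sigma_t$ are shared between the two branches and are independent of $\theta$. If they were not, the displacements would not be exactly $\pm d_t$. This follows from the explicit Euler--Maruyama coefficients in Appendix~\ref{app:proof_Affine Mean Derivation}, which are functions of $(x_t,t)$ and the noise schedule only. The rest is routine algebra.
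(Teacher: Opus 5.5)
Your proposal is correct and follows essentially the same route as the paper's proof: write $\mu^{\pm}_{\theta,t}=\mu^{\text{old}}_t\pm d_t$, substitute $E^{\pm}_{\theta,t}=\|e_t\mp d_t\|^2_{\Sigma_t^{-1}}$ into $r\,E^+_{\theta,t}+(1-r)\,E^-_{\theta,t}$, expand, and use $r+(1-r)=1$ and $1-2r=-y$. Your explicit identification of the constant as $\|e_t\|^2_{\Sigma_t^{-1}}$, and your check that $B_t$ and $\Sigma_t$ are shared across branches and independent of $\theta$, are slightly more careful than the paper's write-up, which reuses the mean-shift identity from the earlier proof without restating it.
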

\textit{Proof.} See Appendix~\ref{app:proof_Comparison with wMSE}.

Here, defined in Section~\ref{sec:validity}, $e_t=x_{t^-}-\mu^{\text{old}}_t$ is the rollout residual and $d_t=\mu^+_{\theta,t}-\mu^{\text{old}}_t$ is the mirrored mean displacement.
The middle term is the directional alignment signal driven by $y$, while the last term $\|d_t\|^2_{\Sigma_t^{-1}}$ is a separation penalty that discourages large branch displacement irrespective of $y$.

In contrast, the core directional term in $\pi$-StepNFT (derived in Theorem~\ref{thm:Gradient Form and Small-Step Alignment}) depends only on the error difference:
\begin{equation}
    E_{\theta,t}^+ - E_{\theta,t}^- = -4\langle \Sigma_t^{-1}e_t, d_t \rangle.
\end{equation}

\textbf{Implicit Penalty:} The decomposition of the objective above shows that wMSE optimizes the same alignment term \emph{plus} an additional quadratic penalty $\|d_t\|^2_{\Sigma_t^{-1}}$. This penalty explicitly discourages branch separation (and thus suppresses the magnitude of the policy update), even when the data suggests a strong corrective move (i.e., large alignment between $e_t$ and $d_t$). In contrast, by using a logistic ranking loss, $\pi$-StepNFT removes this intrinsic suppression and preserves the alignment signal as the dominant driver for policy improvement.

\textbf{Push-pull dynamics:} In the binary case $r\in\{0,1\}$, the wMSE objective reduces to fitting only one branch: it \emph{pulls} the selected branch toward the observed transition but does not explicitly \emph{push} the other branch away.
By contrast, our logistic ranking enforces a strict ordering between $E^{+}_{\theta,t}$ and $E^{-}_{\theta,t}$: for successful episodes it simultaneously pulls the positive branch and pushes the negative branch away (and vice versa for failures).
This bidirectional signal yields stronger separation and typically sharper gradients during fine-tuning, which translates into faster convergence and higher asymptotic performance in our experiments.


\section{Experiments}
\subsection{Experimental Setup}
\textbf{Evaluation Benchmarks.} We evaluate on 2 multitask benchmarks. For LIBERO~\citep{libero}, we follow the standard protocol across four suites (Spatial, Object, Goal, Long), reporting average success rates over 500 episodes (50 states $\times$ 10 sub-tasks) per suite. For ManiSkill~\citep{maniskill}, we adopt the PutOnPlateInScene multitask setting from RL4VLA~\citep{rl4vla} tested for generalization, which defines 4,352 compositional tasks derived from 16 objects, 17 receptacles, and 16 tabletop scenes.

\textbf{Model Architectures.} We employ $\pi_0$ and $\pi_{0.5}$, OpenPi's flow-based VLAs combining a PaliGemma-3B~\citep{paligemma, paligemma2} backbone with a $\sim$300M parameter flow-matching action expert. $\pi_{0.5}$ incorporates an improved training paradigm. Adhering to official configurations, $\pi_0$ uses vision, text, and proprioception, while $\pi_{0.5}$ omits proprioception on LIBERO; this modality setting remains consistent across SFT and RL phases.

\textbf{SFT Initialization.} We initialize our policy using $\pi_\texttt{RL}$ checkpoints. 
For LIBERO, to prevent performance saturation from masking RL gains, we train on pruned subsets of the total 1,692 trajectories:
$\pi_0$ uses 58 trajectories for Spatial/Object/Goal and 208 for Long;
$\pi_{0.5}$ uses a unified few-shot set of 40 trajectories (1 per sub-task).
As for ManiSkill, we use the full 16,384 trajectories due to task complexity.


\textbf{RL Training Protocol.} We freeze the VLM backbone and fine-tune only the action expert. Training utilizes the RLinf~\citep{rlinf} framework, which maximizes throughput by co-locating the environment, rollout policy, and actor on the same GPU. Main experiments were conducted on $8\times$ NVIDIA H100 (80GB) GPUs; ablations used $8\times$ NVIDIA RTX 4090 (48GB). Hyperparameters are detailed in the Appendix~\ref{app:hyperparameters}.

\subsection{Main Results}

\begin{table}[t]
\caption{Success rates (\%) on LIBERO in the few-shot setting.}
\centering
\small
\setlength{\tabcolsep}{6pt}
\renewcommand{\arraystretch}{1.15}
\resizebox{\columnwidth}{!}{%
\begin{tabular}{llcccccc}
\toprule
\textbf{Model} & & \textbf{Spatial} & \textbf{Object} & \textbf{Goal} & \textbf{Long} & \textbf{Avg.} & $\Delta$ \textbf{Avg.} \\
\midrule

\rowcolor{gray!15}
\multicolumn{8}{l}{\textbf{\# Full SFT}} \\
\multicolumn{2}{l}{Octo~\citep{octo}}               & 78.9 & 85.7 & 84.6 & 51.1 & 75.1 & --- \\
\multicolumn{2}{l}{OpenVLA~\citep{openvla}}            & 84.7 & 88.4 & 79.2 & 53.7 & 76.5 & --- \\
\multicolumn{2}{l}{$\pi_{\text{fast}}$~\citep{fast}}& 96.4 & 96.8 & 88.6 & 60.2 & 85.5 & --- \\
\multicolumn{2}{l}{OpenVLA-OFT~\citep{openvlaoft}}        & 91.6 & 95.3 & 90.6 & 86.5 & 91.0 & --- \\
\multicolumn{2}{l}{$\pi_{0}$}          & 96.8 & 98.8 & 95.8 & 85.2 & 94.2 & --- \\
\multicolumn{2}{l}{$\pi_{0.5}$}        & 98.8 & 98.2 & 98.0 & 92.4 & 96.9 & --- \\

\midrule
\rowcolor{blue!8}
\multicolumn{8}{l}{\textbf{\# Few-shot SFT + RL}} \\
\multirow{4}{*}{$\pi_{0}$}
& SFT                 & 65.3 & 64.4 & 49.8 & 51.2 & 57.6 & --- \\
& $\pi_\texttt{RL}$ (Flow-SDE + PPO)      & 98.4 & 99.4 & 96.2 & 90.2 & 96.0 & +38.4 \\
& $\pi_\texttt{RL}$ (Flow-SDE + GRPO)      & 97.8 & 97.8 & 83.2 & 81.4 & 90.0 & +32.4 \\
& $\pi$-StepNFT       & 93.5 & 98.0 & 83.7 & 86.7 & 90.5 & +32.9 \\

\midrule
\rowcolor{green!8}
\multicolumn{8}{l}{\textbf{\# Few-shot SFT + RL}} \\
\multirow{4}{*}{$\pi_{0.5}$}
& SFT                 & 84.6 & 95.4 & 84.6 & 43.9 & 77.1 & --- \\
& $\pi_\texttt{RL}$ (Flow-SDE + PPO)      & 99.6 & 100  & 98.8 & 93.0 & 97.9 & +20.8 \\
& $\pi_\texttt{RL}$ (Flow-SDE + GRPO)     & 97.4 & 99.8 & 91.2 & 77.6 & 91.5 & +14.4 \\
& $\pi$-StepNFT       & 97.8 & 100  & 98.2 & 79.8 & 94.0 & +16.9 \\

\bottomrule
\end{tabular}%
}
\label{tab:libero}
\end{table}

\begin{table}[t]
\caption{Success rates (\%) on ManiSkill across In-Distribution (IND) and Out-Of-Distribution (OOD) settings.}
\centering
\small
\setlength{\tabcolsep}{6pt}
\renewcommand{\arraystretch}{1.15}
\resizebox{\columnwidth}{!}{%
\begin{tabular}{llccccc}
\toprule
\multirow{2}{*}{\textbf{Model}} & \multirow{2}{*}{} & \multirow{2}{*}{\textbf{IND}} & \multicolumn{4}{c}{\textbf{OOD}} \\
\cmidrule(lr){4-7}
& & & \textbf{Vision} & \textbf{Semantic} & \textbf{Execution} & \textbf{Avg.} \\
\midrule

\multirow{3}{*}{$\pi_{0}$}
& Full SFT         & 38.4 & 32.6 & 8.4  & 13.2 & 18.1 \\
& $\pi_\texttt{RL}$ (Flow-SDE + PPO)   & 78.8 & 61.1 & 25.4 & 31.5 & 39.3 \\
& $\pi$-StepNFT    & \textbf{79.2} & \textbf{69.1} & \textbf{49.1} & \textbf{33.1} & \textbf{50.4} \\
\midrule
\multirow{3}{*}{$\pi_{0.5}$}
& Full SFT         & 40.1 & 40.2 & 16.6 & 22.4 & 26.4 \\
& $\pi_\texttt{RL}$ (Flow-SDE + PPO)   & \textbf{90.9} & 68.0 & 34.5 & \textbf{45.4} & 49.3 \\
& $\pi$-StepNFT    & 85.4 & \textbf{76.9} & \textbf{56.6} & 45.1 & \textbf{59.5} \\
\bottomrule
\end{tabular}%
}
\label{tab:maniskill}
\end{table}

\textbf{LIBERO: Unlocking potential from few-shot SFT.} 

Table~\ref{tab:libero} reveals that SFT baselines are constrained by a narrow expert manifold, yielding initial success rates of only 57.6\% ($\pi_0$) and 77.1\% ($\pi_{0.5}$). $\pi$-StepNFT unlocks the model's latent capacity via ``wider space'' exploration, significantly boosting average performance to 90.5\% and 94.0\%, respectively. Notably, on short-horizon tasks (e.g., Object), our method achieves performance comparable to PPO. Regarding alignment, while critic-based methods (PPO) maintain an advantage in long-horizon tasks due to temporal credit assignment, our method notably outperforms the critic-free GRPO baseline (e.g., 86.7\% vs. 81.4\% on $\pi_0$ Long), demonstrating that step-wise supervision offers highly competitive guidance without the need for estimating advantages.

\textbf{ManiSkill: Critic-free generalization.} 

Unlike LIBERO, ManiSkill features high visual diversity, requiring generalization to unseen textures, objects and positions (OOD). Value-based methods estimate values from vision-language embeddings, which often causes critics to overfit to nuisance visual features and specific language prompts rather than task semantics. $\pi$-StepNFT bypasses this by relying on ground-truth outcomes. As shown in Table~\ref{tab:maniskill}, while PPO is competitive in IND settings, $\pi$-StepNFT dominates in OOD scenarios. For $\pi_0$, it achieves an OOD average of 50.4\% (+11.1\% over PPO), nearly doubling success rates on Semantic shifts (unseen objects/instructions) to 49.1\%. This robust trend holds for $\pi_{0.5}$ (59.5\% OOD average vs. 49.3\%), confirming that critic-free supervision effectively mitigates visual overfitting. 

\subsection{Ablations Studies}
\begin{figure}[t]
  \centering
  \begin{subfigure}[t]{0.48\columnwidth}
    \centering
    \includegraphics[width=\linewidth]{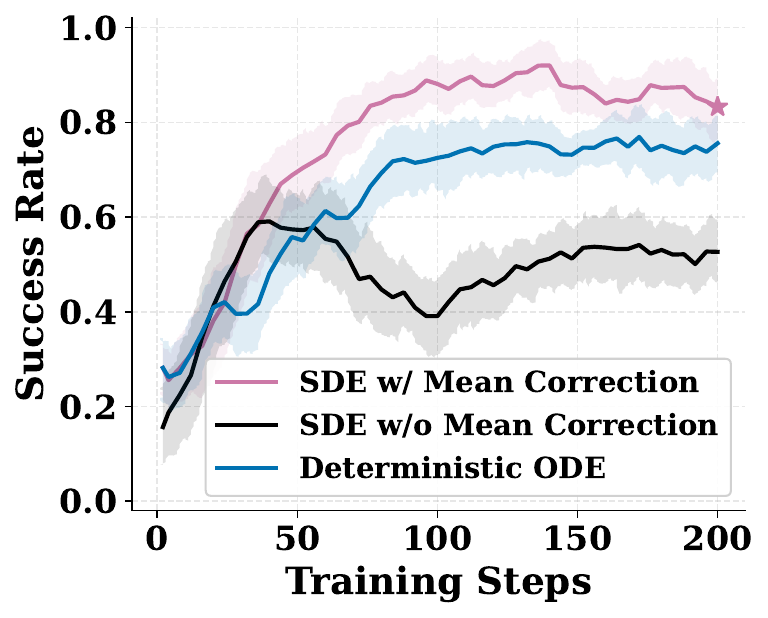}
    \caption{\textbf{Stochastic rollouts.} ODE vs.\ Flow-SDE variants.}
    \label{fig:ablation_sde}
  \end{subfigure}
  \hfill
  \begin{subfigure}[t]{0.48\columnwidth}
    \centering
    \includegraphics[width=\linewidth]{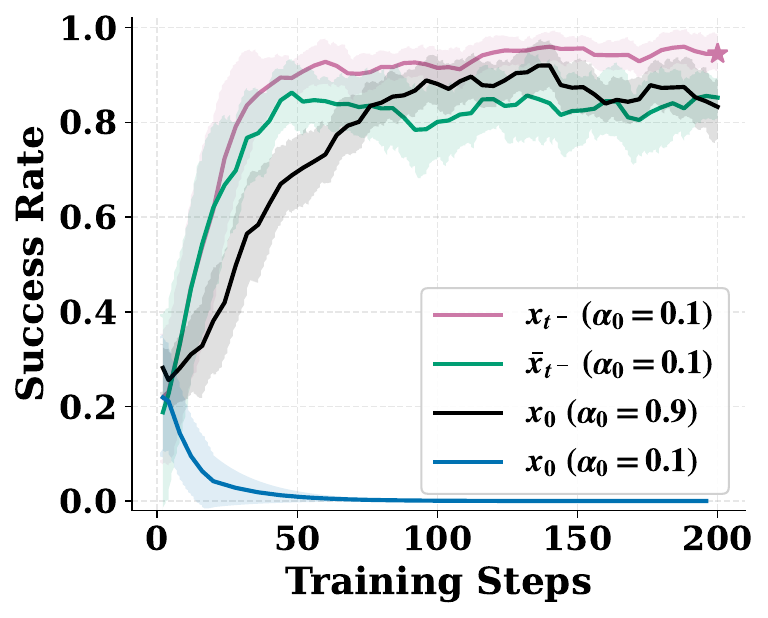}
    \caption{\textbf{Regression target.} $x_0$ vs.\ step-wise $x_{t^-}$ supervision.}
    \label{fig:ablation_target}
  \end{subfigure}

    \caption{Flow-SDE sampling and step-wise supervision improve on-policy stability.}
  \label{fig:ablation_531_532}
\end{figure}

\begin{figure}[t]
  \centering
  \begin{subfigure}[t]{0.48\columnwidth}
    \centering
    \includegraphics[width=\linewidth]{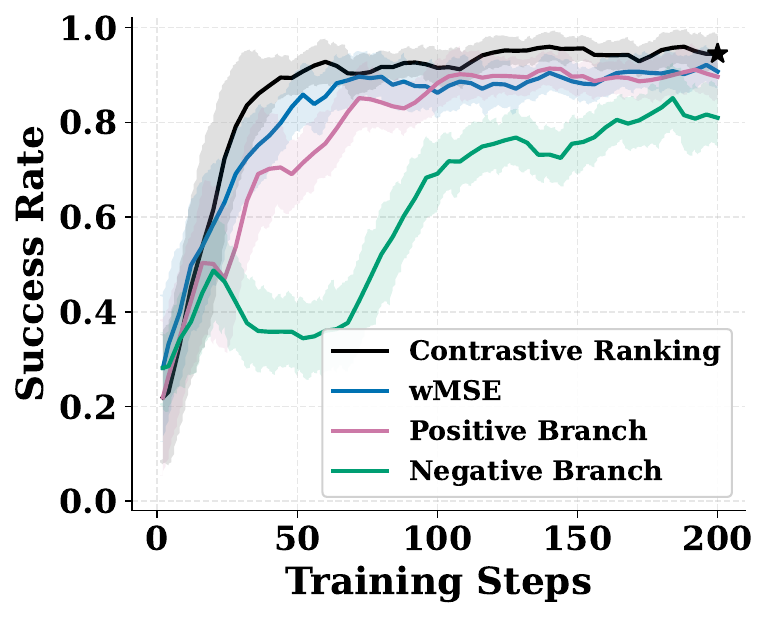}
    \caption{\textbf{Loss formulation.} wMSE vs.\ contrastive ranking.}
    \label{fig:ablation_loss}
  \end{subfigure}
  \hfill
  \begin{subfigure}[t]{0.48\columnwidth}
    \centering
    \includegraphics[width=\linewidth]{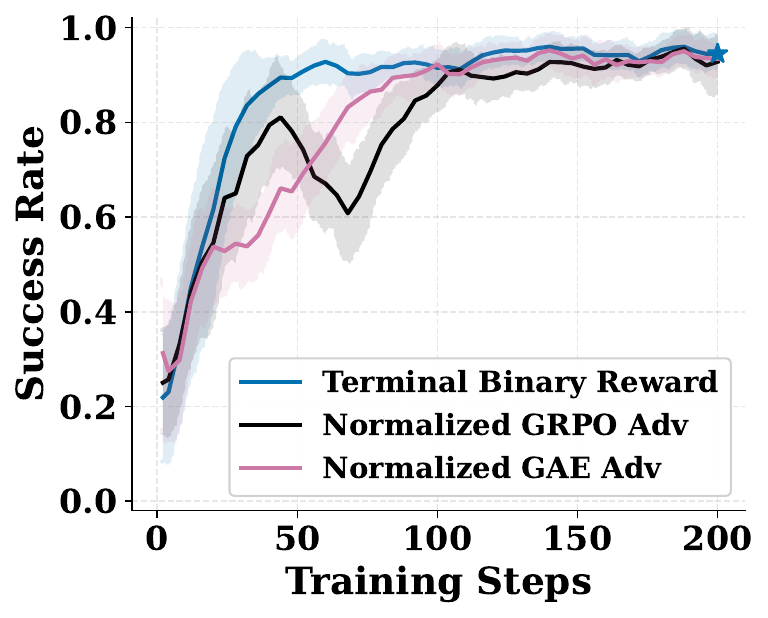}
    \caption{\textbf{Critic-free learning.} Sparse labels vs.\ advantage signals.}
    \label{fig:ablation_credit}
  \end{subfigure}

  \caption{Contrastive ranking enables stable critic-free learning.}
  \label{fig:ablation_533_534}
\end{figure}

\begin{figure*}[t]
  \centering
  \begin{subfigure}[t]{0.32\textwidth}
    \centering
    \includegraphics[width=\linewidth]{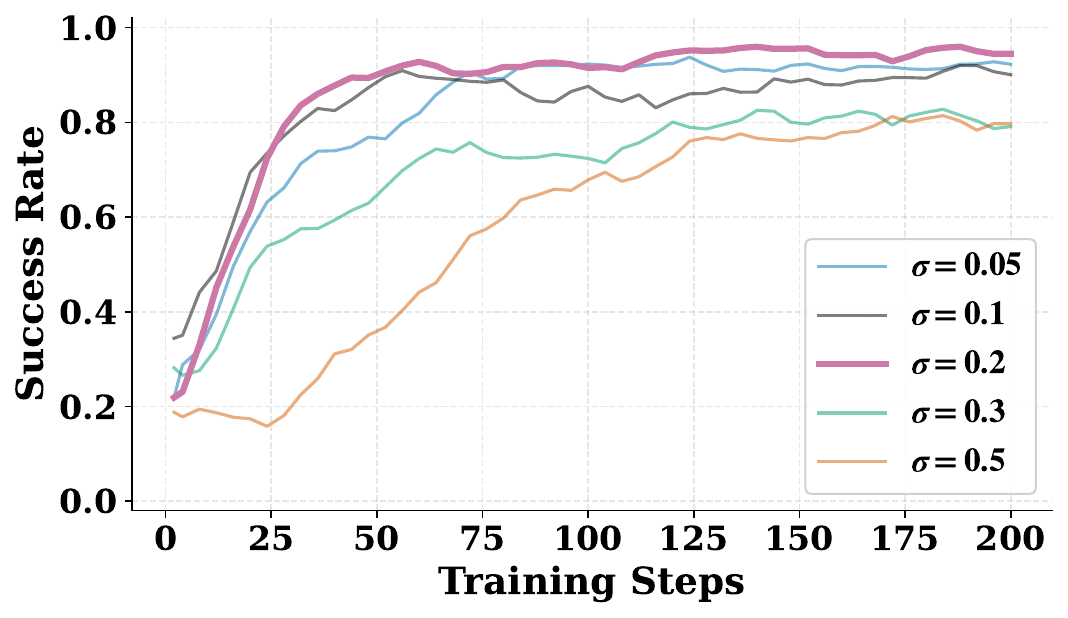}
    \caption{Noise level $\sigma$.}
    \label{fig:hparam_sigma}
  \end{subfigure}
  \hfill
  \begin{subfigure}[t]{0.32\textwidth}
    \centering
    \includegraphics[width=\linewidth]{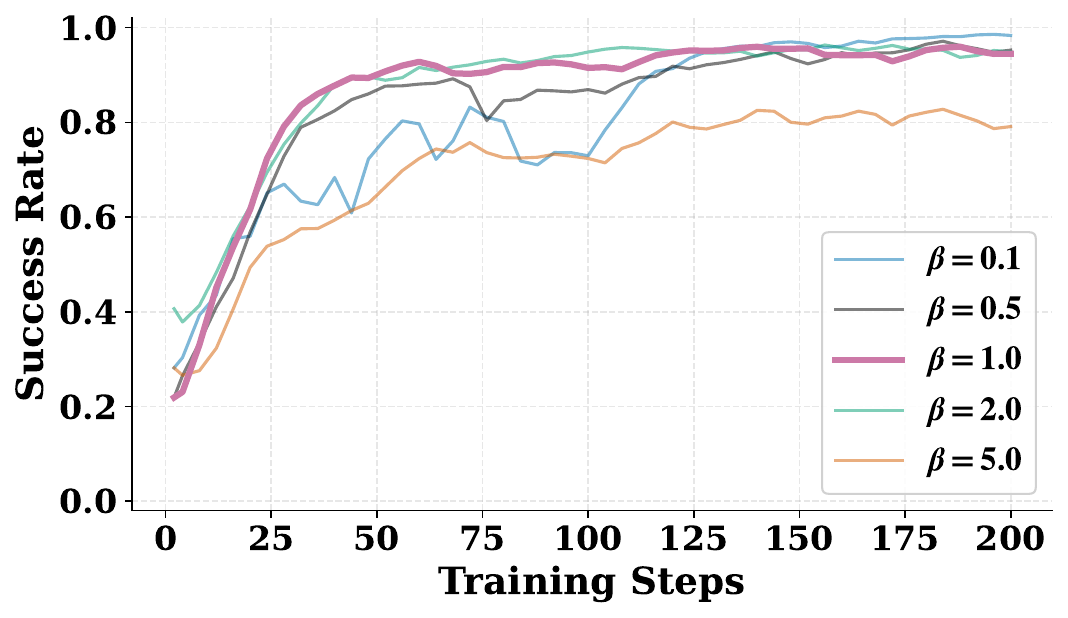}
    \caption{Trust region size $\beta$.}
    \label{fig:hparam_beta}
  \end{subfigure}
  \hfill
  \begin{subfigure}[t]{0.32\textwidth}
    \centering
    \includegraphics[width=\linewidth]{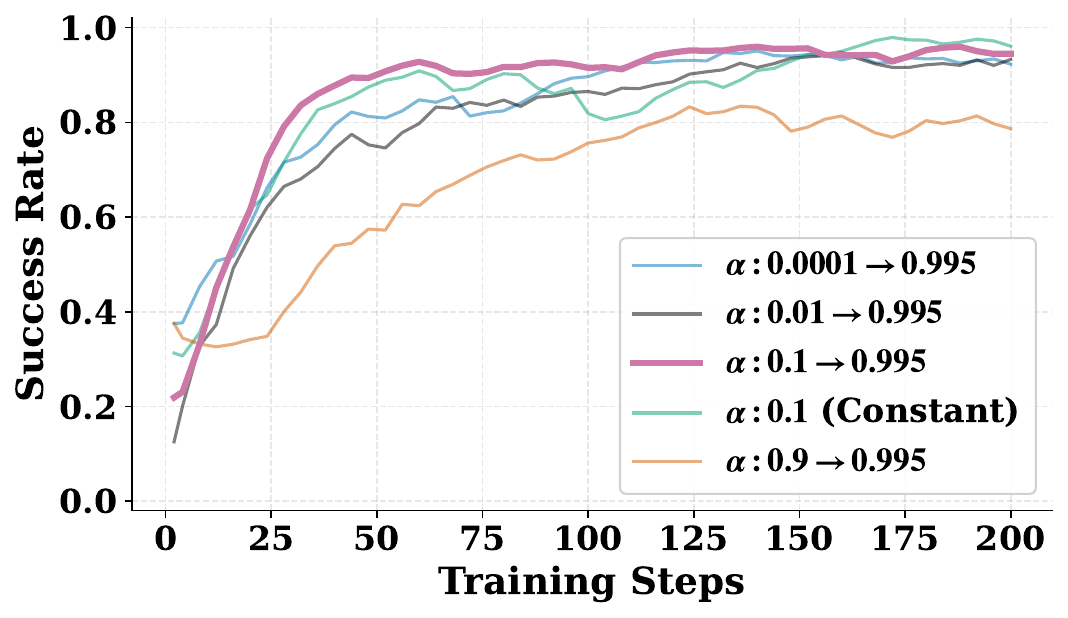}
    \caption{Decay $\alpha$.}
    \label{fig:hparam_alpha}
  \end{subfigure}

  \caption{\textbf{Hyperparameter sensitivity analysis.} Configuration selected for main experiments is highlighted by the \textbf{bold pink} curves.}
  \label{fig:hparam_sensitivity}
  \vspace{-0.3em}
\end{figure*}

  

To verify the efficacy of $\pi$-StepNFT, we conduct a component-wise analysis aligned with the challenges identified in Section~\ref{introduction}. 
First, we decouple the effects of \textit{stochastic exploration}, investigating whether SDE sampling with noise-aware correction is strictly necessary for manifold expansion compared to deterministic ODEs. 
Second, we analyze \textit{supervision granularity}, contrasting our step-wise $x_{t^-}$ target against the standard terminal $x_0$ regression to demonstrate its impact on training stability and convergence speed. 
Third, we evaluate the \textit{learning objective}, comparing our contrastive ranking formulation against weighted-MSE to highlight the benefits of removing the implicit separation penalty. 
Finally, we assess the necessity of \textit{explicit critics}, showing that our likelihood-free framework achieves competitive performance using only sparse binary rewards, and provide a sensitivity analysis for key hyperparameters.

\textbf{Impact of Stochastic Exploration.}
To isolate the benefit of exploration, we compare rollout strategies while fixing the regression target to the final denoised output $x_0$ and using a conservative EMA schedule ($\alpha$: 0.9 $\to$ 0.995). As shown in Figure~\ref{fig:ablation_sde}, deterministic ODE rollouts \ding{182}(Figure~\ref{fig:teaser}) plateau early, confirming that restricted state coverage hinders policy improvement. While standard SDE \ding{183} widens the visited manifold, significant performance gains are only realized when the objective explicitly accounts for injected noise via mean correction \ding{184} (Eq.~\eqref{eq:Affine Mean Form}). This indicates that effective exploration requires not just traversing a wider space, but utilizing a learning signal that mathematically aligns the noisy transition back to the policy's velocity field.

\textbf{Regression Target Granularity.}
We evaluate the efficacy of terminal $x_0$ versus step-wise $x_{t^-}$ regression targets under stochastic rollouts in Figure~\ref{fig:ablation_target}. Empirical results demonstrate that supervision via $x_0$ induces significant instability, necessitating overly conservative synchronization to prevent policy collapse. Conversely, the step-wise target $x_{t^-}$ \ding{185} facilitates stable, near on-policy learning even under aggressive updates, thereby accelerating convergence. This suggests that precise, local supervision is essential to counteract the distribution shift introduced by active exploration, whereas terminal targets provide gradients that are too coarse for effective manifold adherence .

\textbf{Objective Formulation: Ranking vs.\ wMSE.}
We benchmark the proposed contrastive ranking objective against a weighted-MSE (wMSE) baseline and single-branch ablations in Figure~\ref{fig:ablation_loss}. We observe that utilizing exclusively Positive or Negative branches yields partial improvement. This confirms that valid gradient signals exist in both directions, yet combining them yields superior performance. Crucially, the wMSE baseline underperforms because, in the binary reward setting, it degenerates to fitting a single branch . Consequently, it fails to leverage both positive and negative signals simultaneously. In contrast, our ranking objective establishes a "push-pull" dynamic by utilizing both branches to enforce strict preference separation. This effectively removes the "implicit separation penalty"  that otherwise suppresses the policy update magnitude.


\textbf{Necessity of Value Estimation.}
We investigate the trade-off between supervision density and training complexity in Figure~\ref{fig:ablation_credit}. Although dense step-level value estimates can in principle improve long-horizon credit assignment, sparse trajectory-level outcomes remain highly competitive for general manipulation tasks. Empirically, binary supervision yields smoother training because it relies on accurate environment feedback rather than approximate value estimates. Unlike image generation, where sample quality varies continuously and advantage-style soft weighting is more natural, embodied control typically has discrete success-or-failure outcomes, similar to mathematical reasoning~\citep{nft}. Correspondingly, treating $r$ as a bounded trajectory success probability $r \in [0,1]$ avoids the instability of unbounded advantage scores and reduces the need for normalization and clipping. Our probability-based formulation is also compatible with denser supervision: the sparse signal can be replaced by an offline critic-learned that predicts step-wise success probabilities, enabling finer credit assignment without changing the architecture.

\textbf{Hyperparameter Sensitivity and Robustness.}
Figure~\ref{fig:hparam_sensitivity} illustrates key trade-offs. For noise level $\sigma$, excessive noise impedes convergence by overly expanding the search space, while insufficient noise limits exploration. For trust region size $\beta$, results indicate that values around $[1.0, 2.0]$ are optimal; larger $\beta$ violate local linearity, whereas smaller steps induce gradient instability. Regarding the decay $\alpha$, a dynamic strategy proves most effective. High decay (slow updates) causes significant off-policy lag and lowers the performance ceiling, while constant or overly aggressive updates risk collapse. A dynamic schedule that progressively increases decay balances initial acceleration with final stability, which matches the small-step alignment in Theorem~\ref{thm:Gradient Form and Small-Step Alignment}.

\section{Conclusion}

In this paper, we introduced $\pi$-StepNFT, a critic-and-likelihood-free framework for flow-based VLAs that structurally eliminates auxiliary value networks and requires only a single forward pass per step. We identified that wider exploration spaces necessitate finer-grained, step-wise guidance for effective alignment. Empirically, $\pi$-StepNFT unlocks latent potential on LIBERO in few-shot SFT settings and achieves superior OOD generalization on ManiSkill by preventing multimodal overfitting. These results establish a scalable, robust paradigm for fine-tuning generalist robot policies in complex real-world scenarios.

\section*{Impact Statement}

This paper introduces a framework for fine-tuning flow-based vision-language-action (VLA) policies to improve the efficiency and robustness of embodied agents. Beyond algorithmic advances, our work has implications for the accessibility and sustainability of robotic learning.

\textbf{Democratization of embodied AI research:}
Training large-scale VLA models often requires substantial compute, in part due to the cost of differentiating through ODE trajectories. By proposing a likelihood-free, critic-free approach that uses a single forward pass per optimization step, $\pi$-StepNFT lowers the hardware barrier. This reduced overhead can broaden participation by smaller labs and academic groups, supporting a more diverse research community.

\textbf{Safety and robustness:}
By improving out-of-distribution (OOD) generalization, our method can yield agents that behave more reliably in unstructured real-world settings. While increased capability may introduce dual-use concerns, our fine-grained supervision encourages adherence to expert manifolds and may reduce unpredictable behaviors during deployment.

\nocite{coefficients, iterative, step, discrete, ranking, towards, direct, smolvla, policy, deep, fine, denoising, score, continuous, 4090, H100, gcc}

\bibliography{paper}
\bibliographystyle{icml2026}

\newpage
\appendix
\onecolumn
\section{Theoretical Analysis and Proofs}
\label{app:proofs}

In this section, we provide detailed proofs for the lemmas, propositions, and theorems presented in the main text.

\subsection{Proof of \autoref{eq:Affine Mean Form} (Affine Mean Derivation)}
\label{app:proof_Affine Mean Derivation}

We derive the explicit forms of $U_t$ and $B_t$ stated in \autoref{eq:Affine Mean Form}.

The flow-SDE solver computes the next mean $\mu_t(v)$ by mixing the rectified flow endpoints $x_0^{\text{pred}} = x_t - v \cdot t$ and $x_1^{\text{pred}} = x_t + v \cdot (1-t)$ with weights derived from the Euler-Maruyama discretization. The weights are given by:
\begin{equation}
    w_0 = 1 - t + \delta_t, \qquad w_1 = (t - \delta_t) - \frac{\sigma_t^2 \delta_t}{2t}.
\end{equation}
The mean is given by the linear combination:
\begin{align}
    \mu_t(v) &= w_0 x_0^{\text{pred}} + w_1 x_1^{\text{pred}} \nonumber \\
             &= w_0 (x_t - v t) + w_1 (x_t + v (1-t)) \nonumber \\
             &= (w_0 + w_1)x_t + \big( -t w_0 + (1-t)w_1 \big)v.
\end{align}
Directly computing the coefficients for $x_t$ and $v$ yields:

\textbf{Coefficient for $x_t$ ($U_t$):}
\begin{equation}
    U_t = w_0 + w_1 = (1 - t + \delta_t) + \left( t - \delta_t - \frac{\sigma_t^2 \delta_t}{2t} \right) = 1 - \frac{\sigma_t^2 \delta_t}{2t}.
\end{equation}

\textbf{Coefficient for $v$ ($B_t$):}
\begin{align}
    B_t &= -t w_0 + (1-t)w_1 \nonumber \\
        &= -t(1 - t + \delta_t) + (1-t)\left( t - \delta_t - \frac{\sigma_t^2 \delta_t}{2t} \right) \nonumber \\
        &= -t + t^2 - t\delta_t + (t - \delta_t - \frac{\sigma_t^2 \delta_t}{2t} - t^2 + t\delta_t + t\frac{\sigma_t^2 \delta_t}{2t}) \nonumber \\
        &= -\delta_t - (1-t)\frac{\sigma_t^2 \delta_t}{2t}.
\end{align}
Thus, $\mu_t(v) = U_t(x_t, t) + B_t(t) v$, matching the affine form in Equation~\eqref{eq:Affine Mean Form}. \hfill \qed

\subsection{Proof of Lemma~\ref{lem:Log-Likelihood Ratio} (Log-Likelihood Ratio)}
\label{app:proof_Log-Likelihood Ratio}

We prove that the difference in variance-normalized errors equals the log-likelihood ratio.
Let $q(x) = \mathcal{N}(\mu, \Sigma)$. Its log-density is:
\begin{equation}
    \log q(x) = -\frac{1}{2}(x - \mu)^\top \Sigma^{-1} (x - \mu) - \frac{1}{2} \log \det(2\pi \Sigma).
\end{equation}
For the two branches $q^+_t = \mathcal{N}(\mu_t^+, \Sigma_t)$ and $q^-_t = \mathcal{N}(\mu_t^-, \Sigma_t)$, they share the same covariance $\Sigma_t$. Subtracting their log-densities cancels the normalization constant:
\begin{align}
    \log q^+_t(x_{t^-}) - \log q^-_t(x_{t^-}) &= -\frac{1}{2} \left( (x_{t^-} - \mu_t^+)^\top \Sigma_t^{-1} (x_{t^-} - \mu_t^+) - (x_{t^-} - \mu_t^-)^\top \Sigma_t^{-1} (x_{t^-} - \mu_t^-) \right) \nonumber \\
    &= -\frac{1}{2} \left( \| x_{t^-} - \mu_t^+ \|^2_{\Sigma_t^{-1}} - \| x_{t^-} - \mu_t^- \|^2_{\Sigma_t^{-1}} \right) \nonumber \\
    &= -\frac{1}{2} (E_\theta^+ - E_\theta^-).
\end{align}
This concludes the proof. \hfill \qed

\subsection{Proof of Proposition~\ref{prop:Bayes Monotonicity} (Bayes Monotonicity)}
\label{app:proof_Bayes Monotonicity}

Fix $(x_t,c)$ and denote the prior success probability conditioned on $(x_t,c)$ by
$\pi(x_t,c)\triangleq \mathbb P(o=1\mid x_t,c)\in(0,1)$.

By Bayes' rule, the posterior success probability given the observed next state $x_{t^-}$ is
\begin{equation}
\eta(x_{t^-};x_t,c)
\triangleq
\mathbb P(o=1\mid x_{t^-},x_t,c)
=
\frac{p(x_{t^-}\mid x_t,c,o=1)\,\pi(x_t,c)}
{p(x_{t^-}\mid x_t,c,o=1)\,\pi(x_t,c)+p(x_{t^-}\mid x_t,c,o=0)\,(1-\pi(x_t,c))}.
\end{equation}
Define the oracle likelihood ratio
\[
\Lambda(x_{t^-};x_t,c)
\triangleq
\frac{p(x_{t^-}\mid x_t,c,o=1)}{p(x_{t^-}\mid x_t,c,o=0)}.
\]
Dividing the numerator and denominator by $p(x_{t^-}\mid x_t,c,o=0)$ yields
\begin{equation}
\eta(x_{t^-};x_t,c)
=
\frac{\Lambda(x_{t^-};x_t,c)\,\pi(x_t,c)}
{\Lambda(x_{t^-};x_t,c)\,\pi(x_t,c)+(1-\pi(x_t,c))}.
\end{equation}
For constants $a=\pi(x_t,c)>0$ and $b=1-\pi(x_t,c)>0$, consider
$f(\lambda)=\frac{a\lambda}{a\lambda+b}$ for $\lambda>0$. Its derivative is
\[
f'(\lambda)=\frac{ab}{(a\lambda+b)^2}>0.
\]
Therefore $\eta(x_{t^-};x_t,c)$ is strictly increasing in the likelihood ratio
$\Lambda(x_{t^-};x_t,c)$, completing the proof. \hfill\qed

\subsection{Oracle Splits from Diffusion-NFT}
\label{app:oracle_splits}

\paragraph{Notation and symbol disambiguation.}
We use $\kappa_t(x_t\mid x_0)$ to denote the \emph{forward/noising kernel}
that maps a terminal sample $x_0$ to an intermediate noisy state $x_t$ (as in diffusion models).

\paragraph{Setup.}
Fix a context $c$. Let $x_0$ denote the terminal sample (e.g., the final solver output used to form an action),
with rollout terminal distribution $\pi^{\mathrm{old}}_0(x_0\mid c)$ induced by $\pi_{\theta^{\mathrm{old}}}$.
Let $\kappa_t(x_t\mid x_0)$ be the forward kernel and define the induced marginal
\[
\pi^{\mathrm{old}}_t(x_t\mid c)=\int \kappa_t(x_t\mid x_0)\,\pi^{\mathrm{old}}_0(x_0\mid c)\,dx_0,
\]
and the diffusion posterior
\[
\pi^{\mathrm{old}}_{0|t}(x_0\mid x_t,c)
=\frac{\kappa_t(x_t\mid x_0)\,\pi^{\mathrm{old}}_0(x_0\mid c)}{\pi^{\mathrm{old}}_t(x_t\mid c)}.
\]

\paragraph{Optimality variable.}
Introduce a latent optimality variable $o\in\{0,1\}$ and an instance-level score
$r(x_0,c)\in[0,1]$ satisfying $r(x_0,c)=\mathbb P(o=1\mid x_0,c)$.
This is the same setup as Diffusion-NFT; see their Appendix for proofs.

\begin{lemma}[Distribution Split (Diffusion-NFT)]
\label{lem:dist_split_dnf}
Let $p(c)\triangleq \mathbb E_{x_0\sim\pi^{\mathrm{old}}_0(\cdot\mid c)}[r(x_0,c)]$.
Define the oracle terminal distributions
\[
\pi_0^{+}(x_0\mid c)=\frac{r(x_0,c)\,\pi^{\mathrm{old}}_0(x_0\mid c)}{p(c)},
\qquad
\pi_0^{-}(x_0\mid c)=\frac{(1-r(x_0,c))\,\pi^{\mathrm{old}}_0(x_0\mid c)}{1-p(c)}.
\]
Then
\[
\pi^{\mathrm{old}}_0(x_0\mid c)=p(c)\,\pi_0^{+}(x_0\mid c)+(1-p(c))\,\pi_0^{-}(x_0\mid c).
\]
\end{lemma}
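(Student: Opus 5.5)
The identity is a direct algebraic check, and I would verify it pointwise in $x_0$ for fixed $c$. First I would substitute the two definitions into the right-hand side. The mixture weights then cancel the normalizers exactly: $p(c)\,\pi_0^{+}(x_0\mid c)=r(x_0,c)\,\pi^{\mathrm{old}}_0(x_0\mid c)$ and $(1-p(c))\,\pi_0^{-}(x_0\mid c)=(1-r(x_0,c))\,\pi^{\mathrm{old}}_0(x_0\mid c)$. Adding these gives $\big(r+(1-r)\big)\pi^{\mathrm{old}}_0=\pi^{\mathrm{old}}_0$, which is the claim.

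The only point needing care is that the objects in the statement are well defined. The cancellation above requires $0<p(c)<1$, since otherwise one of $\pi_0^{\pm}$ has a zero denominator. If $p(c)=0$, then $r=0$ holds $\pi^{\mathrm{old}}_0$-a.s., so $\pi_0^{-}=\pi^{\mathrm{old}}_0$ and the identity holds trivially with the term weighted by $p(c)=0$ dropped; the case $p(c)=1$ is symmetric. I would note this degenerate convention and otherwise assume $p(c)\in(0,1)$, which is the same standing assumption as $\pi(x_t,c)\in(0,1)$ in the proof of Proposition~\ref{prop:Bayes Monotonicity}.

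For completeness I would also confirm that $\pi_0^{\pm}$ are genuine probability densities. They are nonnegative because $r\in[0,1]$. Each integrates to one, since
\[
\int r\,\pi^{\mathrm{old}}_0\,dx_0=p(c),\qquad \int (1-r)\,\pi^{\mathrm{old}}_0\,dx_0=1-p(c),
\]
by the definition of $p(c)$ as an expectation under $\pi^{\mathrm{old}}_0$. I would add the interpretation that, by Bayes' rule with $r(x_0,c)=\mathbb P(o=1\mid x_0,c)$, $\pi_0^{\pm}$ are exactly $p(x_0\mid c,o=1)$ and $p(x_0\mid c,o=0)$, with $p(c)=\mathbb P(o=1\mid c)$. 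The lemma is then the law of total probability over $o$. This reading is what the later oracle mean gap uses.

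There is no real obstacle here; the only subtlety is the degenerate case $p(c)\in\{0,1\}$ and integrability of $\pi^{\mathrm{old}}_0$.
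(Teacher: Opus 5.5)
Your proposal is correct and matches the intended argument: the paper gives no proof of its own and cites Diffusion-NFT's appendix, where the split follows exactly as you do it, by substituting the definitions so that $p(c)$ and $1-p(c)$ cancel the normalizers and $r+(1-r)=1$. Your added checks that $\pi_0^{\pm}$ are normalized densities, and your treatment of the degenerate case $p(c)\in\{0,1\}$, are sound refinements that the paper leaves implicit.
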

\textit{Reference.} Diffusion-NFT Appendix (Distribution Split).

\begin{lemma}[Posterior Split (Diffusion-NFT)]
\label{lem:post_split_dnf}
Let $\pi_{0|t}^{\pm}(x_0\mid x_t,c)$ be the posteriors induced by $\pi_0^\pm$:
\[
\pi_{0|t}^{\pm}(x_0\mid x_t,c)
=\frac{\kappa_t(x_t\mid x_0)\,\pi_0^{\pm}(x_0\mid c)}{\pi_t^{\pm}(x_t\mid c)},
\quad
\pi_t^{\pm}(x_t\mid c)=\int \kappa_t(x_t\mid x_0)\,\pi_0^{\pm}(x_0\mid c)\,dx_0.
\]
Then the rollout posterior satisfies
\[
\pi_{0|t}^{\mathrm{old}}(x_0\mid x_t,c)
=\alpha(x_t,c)\,\pi_{0|t}^{+}(x_0\mid x_t,c)
+\bigl(1-\alpha(x_t,c)\bigr)\,\pi_{0|t}^{-}(x_0\mid x_t,c),
\]
where the mixing weight is
\[
\alpha(x_t,c)=\mathbb P(o=1\mid x_t,c)
=p(c)\,\frac{\pi_t^{+}(x_t\mid c)}{\pi_t^{\mathrm{old}}(x_t\mid c)}.
\]
\end{lemma}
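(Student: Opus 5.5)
The plan is a direct Bayes-rule computation, using Lemma~\ref{lem:dist_split_dnf} as the only input. Fix $(x_t,c)$. First I would obtain the marginal identity
\[
\pi_t^{\mathrm{old}}(x_t\mid c)=p(c)\,\pi_t^{+}(x_t\mid c)+(1-p(c))\,\pi_t^{-}(x_t\mid c).
\]
This follows by integrating the distribution split against the forward kernel $\kappa_t(x_t\mid x_0)$, which is linear in the terminal distribution.

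Next, I would start from the definition of the rollout posterior $\pi^{\mathrm{old}}_{0|t}=\kappa_t\,\pi_0^{\mathrm{old}}/\pi_t^{\mathrm{old}}$ and substitute the split $\pi_0^{\mathrm{old}}=p\,\pi_0^++(1-p)\,\pi_0^-$ in the numerator. In each term I would multiply and divide by $\pi_t^{\pm}(x_t\mid c)$, so that $\kappa_t\pi_0^{\pm}/\pi_t^{\pm}=\pi_{0|t}^{\pm}$. This gives
\[
\pi^{\mathrm{old}}_{0|t}=\frac{p\,\pi_t^+}{\pi_t^{\mathrm{old}}}\,\pi^+_{0|t}+\frac{(1-p)\,\pi_t^-}{\pi_t^{\mathrm{old}}}\,\pi^-_{0|t}.
\]
By the marginal identity, the two weights sum to one. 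So setting $\alpha(x_t,c)=p(c)\pi_t^+(x_t\mid c)/\pi_t^{\mathrm{old}}(x_t\mid c)$, the second weight equals $1-\alpha$, which is the claimed mixture.

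It then remains to identify $\alpha$ with $\mathbb P(o=1\mid x_t,c)$. The key modelling assumption is that $o$ depends on $(x_0,c)$ only, so $o\perp x_t\mid (x_0,c)$. Then
\[
\mathbb P(o=1\mid x_t,c)=\int r(x_0,c)\,\pi^{\mathrm{old}}_{0|t}(x_0\mid x_t,c)\,dx_0
=\frac{1}{\pi_t^{\mathrm{old}}}\int \kappa_t\, r\,\pi_0^{\mathrm{old}}\,dx_0 .
\]
Since $r\,\pi_0^{\mathrm{old}}=p\,\pi_0^+$ by definition of $\pi_0^+$, this equals $p\,\pi_t^+/\pi_t^{\mathrm{old}}=\alpha$.

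The main point needing care is this conditional-independence step, together with nondegeneracy. I would state explicitly that the optimality variable is generated from the terminal sample only, which is the Diffusion-NFT graphical model. I would also assume $p(c)\in(0,1)$ and $\pi_t^{\mathrm{old}}(x_t\mid c)>0$, so that all divisions are well defined; the case $p\in\{0,1\}$ is trivial.
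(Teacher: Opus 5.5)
Your proposal is correct and follows the same Bayes-rule argument as the Diffusion-NFT proof the paper cites: substitute the distribution split into the numerator of the rollout posterior, renormalize each term by $\pi_t^{\pm}(x_t\mid c)$, and use the kernel-averaged identity $\pi_t^{\mathrm{old}}=p\,\pi_t^{+}+(1-p)\,\pi_t^{-}$ to see that the two weights sum to one. Your additional step identifying $\alpha$ with $\mathbb P(o=1\mid x_t,c)$ via the conditional independence $o\perp x_t\mid(x_0,c)$ is also correct. It makes explicit a modelling assumption that the paper leaves implicit.
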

\textit{Reference.} Diffusion-NFT Appendix (Posterior Split).

\begin{corollary}[Posterior Expectation Split]
\label{cor:post_expect_split}
Under Lemma~\ref{lem:post_split_dnf}, for any integrable function $\phi(x_0)$,
\[
\mathbb E_{\pi_{0|t}^{\mathrm{old}}(\cdot\mid x_t,c)}[\phi(x_0)]
=
\alpha(x_t,c)\,\mathbb E_{\pi_{0|t}^{+}(\cdot\mid x_t,c)}[\phi(x_0)]
+(1-\alpha(x_t,c))\,\mathbb E_{\pi_{0|t}^{-}(\cdot\mid x_t,c)}[\phi(x_0)].
\]
\end{corollary}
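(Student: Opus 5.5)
The statement to prove is Corollary~\ref{cor:post_expect_split}. It follows from Lemma~\ref{lem:post_split_dnf} by linearity of the integral, so the plan is short.

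\emph{Step 1.} Fix $(x_t,c)$ and take the pointwise mixture identity of Lemma~\ref{lem:post_split_dnf},
\[
\pi_{0|t}^{\mathrm{old}}(x_0\mid x_t,c)=\alpha\,\pi_{0|t}^{+}(x_0\mid x_t,c)+(1-\alpha)\,\pi_{0|t}^{-}(x_0\mid x_t,c).
\]
Here $\alpha=\alpha(x_t,c)$ does not depend on $x_0$.

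\emph{Step 2.} Multiply both sides by $\phi(x_0)$ and integrate over $x_0$. Because $\alpha$ and $1-\alpha$ are constants with respect to $x_0$, they come out of the integrals. This gives
\[
\mathbb E_{\pi^{\mathrm{old}}_{0|t}}[\phi]=\alpha\,\mathbb E_{\pi^+_{0|t}}[\phi]+(1-\alpha)\,\mathbb E_{\pi^-_{0|t}}[\phi].
\]

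\emph{Step 3 (the only real check).} We must justify splitting the integral of the sum into a sum of integrals. For this, each term on the right has to be finite.

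From the definitions in Lemma~\ref{lem:dist_split_dnf}, and assuming $0<p(c)<1$ so that both $\pi_0^{\pm}$ are well defined, we get $\pi_0^{\pm}\le \pi_0^{\mathrm{old}}/p(c)$ and $\pi_0^{\pm}\le \pi_0^{\mathrm{old}}/(1-p(c))$ respectively. Hence each $\pi_0^{\pm}$ is absolutely continuous with respect to $\pi_0^{\mathrm{old}}$.

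At the level of posteriors, we have pointwise
\[
\alpha\,\pi_{0|t}^{+}\le \pi_{0|t}^{\mathrm{old}},\qquad (1-\alpha)\,\pi_{0|t}^{-}\le \pi_{0|t}^{\mathrm{old}}.
\]
Both terms of the mixture are nonnegative and their sum equals $\pi_{0|t}^{\mathrm{old}}$, which gives these inequalities. It follows that if $\phi\in L^1(\pi_{0|t}^{\mathrm{old}})$, then $\alpha\int|\phi|\,\pi^+_{0|t}$ and $(1-\alpha)\int|\phi|\,\pi^-_{0|t}$ are both finite. This justifies the splitting and gives the claim.

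In the degenerate cases $\alpha\in\{0,1\}$, the corresponding term simply vanishes. Vector-valued $\phi$, such as $\phi(x_0)=x_0$ used later for the mean gap, is handled componentwise.

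I do not expect any obstacle beyond this integrability bookkeeping.
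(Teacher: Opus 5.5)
Your proposal is correct and matches the paper's (implicit) argument: the paper gives no separate proof and treats the corollary as immediate from the pointwise mixture identity in Lemma~\ref{lem:post_split_dnf}, integrated against $\phi$ by linearity, exactly as in your Steps 1--2. Your integrability check via $\alpha\,\pi^{+}_{0|t}\le\pi^{\mathrm{old}}_{0|t}$ and $(1-\alpha)\,\pi^{-}_{0|t}\le\pi^{\mathrm{old}}_{0|t}$ is a valid detail that the paper leaves out.
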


\paragraph{Oracle vs.\ constructed branches.}
The oracle objects $(\pi_0^\pm,\pi_{0|t}^\pm)$ are defined by conditioning on the latent outcome $o$.
In contrast, our method constructs mirrored branches $v_\theta^\pm=v^{\mathrm{old}}\pm \beta(v_\theta-v^{\mathrm{old}})$
and the induced solver transitions $q_{\theta,t}^\pm(x_{t^-}\mid x_t,c)$ (Section~\ref{sec:definition}).

\begin{lemma}[Oracle Velocity/Mean Splits (for alignment)]
\label{lem:oracle_velocity_mean_split}
Assume the (oracle) velocity field at solver time $t$ can be expressed as a posterior expectation
under $\pi_{0|t}(\cdot\mid x_t,c)$, i.e.,
$v(x_t,c,t)=\mathbb E_{\pi_{0|t}(\cdot\mid x_t,c)}[\psi(x_0,x_t,c,t)]$ for some function $\psi$.
Define oracle velocities $v^\pm$ by taking the same expectation under $\pi_{0|t}^\pm$.
Then
\[
v^{\mathrm{old}}(x_t,c,t)=\alpha(x_t,c)\,v^{+}(x_t,c,t)+(1-\alpha(x_t,c))\,v^{-}(x_t,c,t).
\]
If additionally the one-step solver mean admits the affine form $\mu_t(v)=A_t x_t + B_t v$
(Eq.~\ref{eq:Affine Mean Form}), then
\[
\mu_t^{\mathrm{old}}(x_t,c)=\alpha(x_t,c)\,\mu_t^{+}(x_t,c)+(1-\alpha(x_t,c))\,\mu_t^{-}(x_t,c),
\quad
\Delta\mu_t^\star(x_t,c)\triangleq \mu_t^{+}(x_t,c)-\mu_t^{-}(x_t,c)=B_t\bigl(v^{+}-v^{-}\bigr).
\]
\end{lemma}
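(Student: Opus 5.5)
The proof is linearity of expectation applied to Corollary~\ref{cor:post_expect_split}, followed by linearity of the affine mean map. Fix $(x_t,c,t)$ and take $\phi(x_0)\triangleq\psi(x_0,x_t,c,t)$. With $x_t,c,t$ held fixed this is a function of $x_0$ alone; we assume it is integrable under the three posteriors, which is implicit in the hypothesis that these expectations define $v$ and $v^\pm$. By assumption, $v^{\mathrm{old}}(x_t,c,t)=\mathbb E_{\pi^{\mathrm{old}}_{0|t}}[\phi]$, and by definition $v^\pm(x_t,c,t)=\mathbb E_{\pi^{\pm}_{0|t}}[\phi]$.

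Corollary~\ref{cor:post_expect_split}, which rests on the posterior split of Lemma~\ref{lem:post_split_dnf}, then gives directly
\[
v^{\mathrm{old}}=\alpha\,v^{+}+(1-\alpha)\,v^{-}.
\]
For vector-valued $\psi$ we apply the corollary componentwise. The only point to check is that $\psi$ may depend on $x_t$: the posterior expectations are taken over $x_0$ with $x_t$ fixed, so this dependence is harmless. This is the one step that needs care, since the corollary is stated for $\phi(x_0)$ only.

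For the mean split we use that $\alpha+(1-\alpha)=1$. With the affine form $\mu_t(v)=A_tx_t+B_tv$ from Eq.~\eqref{eq:Affine Mean Form} (so $A_tx_t$ plays the role of $U_t$), we compute
\[
\alpha\mu_t^{+}+(1-\alpha)\mu_t^{-}
=A_tx_t+B_t\bigl(\alpha v^{+}+(1-\alpha)v^{-}\bigr)
=A_tx_t+B_tv^{\mathrm{old}}
=\mu_t^{\mathrm{old}}.
\]
The key fact here is that $A_t$ and $B_t$ depend only on $(t,\delta_t,\sigma_t)$, as shown in Appendix~\ref{app:proof_Affine Mean Derivation}, and not on which branch we are in.

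Finally, subtracting the two branch means cancels the shared $A_tx_t$ term, which gives $\Delta\mu_t^\star=\mu_t^{+}-\mu_t^{-}=B_t(v^{+}-v^{-})$.
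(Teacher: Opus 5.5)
Your proposal is correct and follows the paper's own argument: the paper's remark says only that the lemma follows from Corollary~\ref{cor:post_expect_split} and linearity of expectation. Your version makes explicit what that one-line remark leaves implicit, namely fixing $x_t$ so that $\psi$ becomes a function of $x_0$ alone, applying the corollary componentwise, and noting that $A_t$ and $B_t$ are the same for every branch.
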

\textit{Remark.} This lemma is a direct consequence of Corollary~\ref{cor:post_expect_split} and linearity of expectation.


\subsection{Proof of Theorem~\ref{thm:Gradient Form and Small-Step Alignment}
(Gradient Form and Alignment)}
\label{app:proof_Gradient Form and Alignment}

Fix a sampled training tuple $(x_t,x_{t^-},c)$ collected under the rollout policy
$v^{\mathrm{old}}=\pi_{\theta^{\mathrm{old}}}(c,x_t,t)$.
Let the rollout one-step mean be $\mu_t^{\mathrm{old}}:=\mu_t(v^{\mathrm{old}})$ and define the residual $e_t \;\triangleq\; x_{t^-}-\mu_t^{\mathrm{old}}$.

Recall that our constructed mirrored branches are
$v_\theta^\pm \;=\; v^{\mathrm{old}} \pm \beta (v_\theta-v^{\mathrm{old}})$,
$\Delta v_\theta \triangleq v_\theta-v^{\mathrm{old}}$,
and that the one-step mean admits the affine form (Eq.~\ref{eq:Affine Mean Form})
$\mu_t(v)=A_t x_t + B_t v$,
with shared covariance $\Sigma_t$.

\paragraph{Step 1: Constructed mean shift.}
Define the constructed branch means
$\mu_{\theta,t}^\pm \;\triangleq\; \mu_t(v_\theta^\pm)$.

Using the affine mean form,
\begin{align}
\mu_{\theta,t}^+ - \mu_t^{\mathrm{old}}
&= B_t\,(v_\theta^+-v^{\mathrm{old}})
= B_t\bigl(\beta(v_\theta-v^{\mathrm{old}})\bigr)
= \beta B_t \Delta v_\theta \;\triangleq\; d_t, \\
\mu_t^{\mathrm{old}} - \mu_{\theta,t}^-
&= B_t\,(v^{\mathrm{old}}-v_\theta^-)
= B_t\bigl(\beta(v_\theta-v^{\mathrm{old}})\bigr)
= d_t.
\end{align}
Hence,
\begin{equation}
\mu_{\theta,t}^\pm = \mu_t^{\mathrm{old}} \pm d_t,
\qquad
d_t = \beta B_t \Delta v_\theta.
\label{eq:constructed_mu_pm}
\end{equation}

\paragraph{Step 2: Error difference (Theorem~\ref{thm:Gradient Form and Small-Step Alignment}.a).}
Recall the step errors
$E_\theta^\pm = \|x_{t^-}-\mu_{\theta,t}^\pm\|^2_{\Sigma_t^{-1}}$.

Substituting $\mu_{\theta,t}^\pm=\mu_t^{\mathrm{old}}\pm d_t$ and $e_t=x_{t^-}-\mu_t^{\mathrm{old}}$ gives
\begin{align}
E_\theta^+ - E_\theta^-
&= \|e_t-d_t\|^2_{\Sigma_t^{-1}} - \|e_t+d_t\|^2_{\Sigma_t^{-1}} \nonumber\\
&= \bigl(\|e_t\|^2_{\Sigma_t^{-1}}+\|d_t\|^2_{\Sigma_t^{-1}}-2\langle e_t,d_t\rangle_{\Sigma_t^{-1}}\bigr)
 - \bigl(\|e_t\|^2_{\Sigma_t^{-1}}+\|d_t\|^2_{\Sigma_t^{-1}}+2\langle e_t,d_t\rangle_{\Sigma_t^{-1}}\bigr) \nonumber\\
&= -4\langle e_t,d_t\rangle_{\Sigma_t^{-1}}
= -4\langle \Sigma_t^{-1}e_t,d_t\rangle,
\end{align}
which proves part (a).

\paragraph{Step 3: Gradient form (Theorem~\ref{thm:Gradient Form and Small-Step Alignment}.b).}
Define the logit
$z_t \;\triangleq\; \tfrac12 y(E_\theta^+ - E_\theta^-)$, \,\,
$\ell_t(\theta)=\mathrm{softplus}(z_t)$.

Using $\nabla\,\mathrm{softplus}(z)=\sigma(z)$,
\[
\nabla_\theta \ell_t(\theta) = \sigma(z_t)\,\nabla_\theta z_t.
\]
From Step 2,
\[
E_\theta^+ - E_\theta^- = -4 e_t^\top \Sigma_t^{-1} d_t,
\]
thus
\[
\nabla_\theta z_t
= \tfrac12 y\,\nabla_\theta(E_\theta^+ - E_\theta^-)
= \tfrac12 y\,\nabla_\theta\bigl(-4 e_t^\top \Sigma_t^{-1} d_t\bigr).
\]
During optimization we treat the rollout branch $v^{\mathrm{old}}$ (hence $\mu_t^{\mathrm{old}}$ and $e_t$)
as constant with respect to $\theta$, so only $d_t$ depends on $\theta$.
From Equation~\eqref{eq:constructed_mu_pm}, $d_t=\beta B_t(v_\theta-v^{\mathrm{old}})$ and therefore
\[
\nabla_\theta d_t = \beta B_t \nabla_\theta v_\theta,
\qquad
\text{equivalently }
\left(\nabla_\theta d_t\right)^\top
= \left(\frac{\partial v_\theta}{\partial \theta}\right)^\top \beta B_t^\top.
\]
Absorbing constant scalar factors into $\propto$, we obtain
\begin{equation}
-\nabla_\theta \ell_t(\theta)
\propto
\sigma(z_t)\,y\left(\frac{\partial v_\theta}{\partial \theta}\right)^\top B_t\Sigma_t^{-1}e_t,
\end{equation}
which proves part (b).

\paragraph{Step 4: Small-step alignment (general $r\in[0,1]$ and binary case).}
We relate the conditional expected update direction to an \emph{oracle} improvement signal.
Recall the signed label is defined in the main text as
$y \triangleq 2r-1,\,r\in[0,1]$,
where $r$ is the observed terminal signal (e.g., success indicator or a normalized score).

Conditioned on $(x_t,c)$, the rollout residual
$e_t \triangleq x_{t^-}-\mu_t^{\mathrm{old}}(x_t,c)$
is zero-mean since $\mu_t^{\mathrm{old}}(x_t,c)=\mathbb E[x_{t^-}\mid x_t,c]$, hence
\begin{equation}
\mathbb E[e_t\mid x_t,c]=0.
\label{eq:et_zero_mean}
\end{equation}

Therefore,
\begin{align}
\mathbb E[y e_t\mid x_t,c]
&=\mathbb E[(2r-1)e_t\mid x_t,c] \nonumber\\
&=2\,\mathbb E[r e_t\mid x_t,c]-\mathbb E[e_t\mid x_t,c] \nonumber\\
&=2\,\mathbb E[r e_t\mid x_t,c].
\label{eq:ye_general}
\end{align}

\textit{(i) General case ($r\in[0,1]$).}
Equation~\eqref{eq:ye_general} shows that the conditional expected direction is governed by the correlation
between the terminal signal $r$ and the local rollout residual $e_t$.
In general, this produces a mixture of success- and failure-associated components (and does not reduce to a single
oracle mean-gap direction without additional assumptions relating $r$ to the latent optimality variable).

\textit{(ii) Binary case ($r\in\{0,1\}$ with $r=o$).}
In sparse-success RL we often take $r$ to be the episode success indicator, i.e.,
$r=o\in\{0,1\}$ where $o$ is the latent optimality variable.
We have 
$\alpha(x_t,c)\triangleq \mathbb P(o=1\mid x_t,c)$ in Lemma~\ref{lem:post_split_dnf}.

Then using the indicator property of $o$ and Equation~\eqref{eq:et_zero_mean},
\begin{align}
\mathbb E[y e_t\mid x_t,c]
&=\mathbb E[(2o-1)e_t\mid x_t,c] \nonumber\\
&=2\,\mathbb E[o e_t\mid x_t,c]-\mathbb E[e_t\mid x_t,c] \nonumber\\
&=2\,\mathbb E[o(x_{t^-}-\mu_t^{\mathrm{old}})\mid x_t,c] \nonumber\\
&=2\,\mathbb P(o=1\mid x_t,c)\,
\mathbb E[x_{t^-}-\mu_t^{\mathrm{old}}\mid x_t,c,o=1] \nonumber\\
&=2\alpha(x_t,c)\bigl(\mu_t^{+}(x_t,c)-\mu_t^{\mathrm{old}}(x_t,c)\bigr),
\label{eq:ye_to_mu_plus_minus_old}
\end{align}
where $\mu_t^{+}(x_t,c)\triangleq \mathbb E[x_{t^-}\mid x_t,c,o=1]$ is the oracle success-branch mean.

By Lemma~\ref{lem:oracle_velocity_mean_split}, the oracle mean mixture identity holds:
\[
\mu_t^{\mathrm{old}}(x_t,c)=\alpha(x_t,c)\mu_t^{+}(x_t,c)+(1-\alpha(x_t,c))\mu_t^{-}(x_t,c),
\]
hence
\[
\mu_t^{+}(x_t,c)-\mu_t^{\mathrm{old}}(x_t,c)
=(1-\alpha(x_t,c))\bigl(\mu_t^{+}(x_t,c)-\mu_t^{-}(x_t,c)\bigr)
=(1-\alpha(x_t,c))\Delta\mu_t^\star(x_t,c),
\]
where $\Delta\mu_t^\star(x_t,c)\triangleq \mu_t^{+}(x_t,c)-\mu_t^{-}(x_t,c)$ is the oracle mean gap.
Substituting into Equation~\eqref{eq:ye_to_mu_plus_minus_old} yields
\begin{equation}
\mathbb E[y e_t\mid x_t,c]
=2\alpha(x_t,c)(1-\alpha(x_t,c))\,\Delta\mu_t^\star(x_t,c).
\label{eq:ye_to_delta_mu_star_final}
\end{equation}

Finally, at the start of training (or for sufficiently small updates) where $v_\theta\approx v^{\mathrm{old}}$
so that $\sigma(z_t)\approx \mathrm{const}$, taking conditional expectation of the gradient form in Step~3 and
using Equation~\eqref{eq:ye_to_delta_mu_star_final} gives
\[
\mathbb E[-\nabla_\theta \ell_t(\theta)\mid x_t,c]
\parallel
\left(\frac{\partial v_\theta}{\partial \theta}\right)^\top
B_t\Sigma_t^{-1}\Delta\mu_t^\star(x_t,c),
\]
where scalar factors such as $2\alpha(1-\alpha)$ do not affect alignment. This proves part (c). \hfill$\square$

\subsection{Proof of Theorem~\ref{thm:Separation Penalty in wMSE} (Comparison with wMSE)}
\label{app:proof_Comparison with wMSE}

\subsubsection{Decomposition of wMSE}
We substitute $E_\theta^\pm = \|e_t \mp d_t\|^2_{\Sigma_t^{-1}}$ into the weighted-MSE objective $L_{\text{wMSE}} = r E_\theta^+ + (1-r) E_\theta^-$.
Expanding the terms immediately yields:
\begin{align}
    L_{\text{wMSE}} &= r(\|e_t\|^2 + \|d_t\|^2 - 2\langle \Sigma_t^{-1}e_t, d_t \rangle) + (1-r)(\|e_t\|^2 + \|d_t\|^2 + 2\langle \Sigma_t^{-1}e_t, d_t \rangle) \nonumber \\
    &= (r + 1 - r)(\|e_t\|^2 + \|d_t\|^2) + 2\langle \Sigma_t^{-1}e_t, d_t \rangle(-r + (1-r)) \nonumber \\
    &= \text{const} + \|d_t\|^2_{\Sigma_t^{-1}} + 2(1-2r)\langle \Sigma_t^{-1}e_t, d_t \rangle.
\end{align}
Using $y = 2r - 1$ (which implies $1-2r = -y$), we obtain:
\begin{equation}
    L_{\text{wMSE}} = \text{const} - 2y \langle \Sigma_t^{-1} e_t, d_t \rangle + \| d_t \|^2_{\Sigma_t^{-1}}.
\end{equation}

\subsubsection{Ranking Calibration}
Define the ranking error event at a sampled step $t$:
\begin{equation}
    \mathcal{E}_t := \{ y(x_0,c)\cdot(E_\theta^+ - E_\theta^-) > 0 \}.
\end{equation}
$\pi$-StepNFT minimizes $\text{softplus}(y(E^+ - E^-))$, which is a convex upper bound on the indicator function $\mathbf{1}_{\mathcal{E}_t}$, thus directly minimizing ranking errors.
In contrast, wMSE minimizes a regression loss with the penalty $\|d_t\|^2$ (from A.7.1), which restricts the branch separation required to satisfy the ranking condition $\mathcal{E}_t$ when the margin is small.

\subsubsection{Binary Case Analysis}
When $r \in \{0, 1\}$:
\begin{itemize}
    \item \textbf{wMSE:} If $r=1$, $L_{\text{wMSE}} = E^+$. It pulls $\mu^+$ to $x_{t^-}$ but provides no signal to $\mu^-$.
    \item \textbf{$\pi$-StepNFT:} Minimizes $\text{softplus}(E^+ - E^-)$. It simultaneously pulls $\mu^+$ to $x_{t^-}$ and pushes $\mu^-$ away. This ``push-pull'' dynamic generates stronger gradients for discrimination. \hfill \qed
\end{itemize}

\newpage
\section{Detailed Related Works}
\label{app:Detailed Related Works}

\subsection{Online RL for VLAs}
VLA models map multimodal inputs to actions via diverse representations: discretizing actions into tokens (RT series~\citep{rt2}, OpenVLA~\citep{openvla}), mapping to continuous regression features (OpenVLA-OFT~\citep{openvlaoft}), or outputting actions via generative denoising processes (Octo~\citep{octo}, GR00T~\citep{gr00t}, OpenPi~\citep{pi0, pi0.5, pi0.6}). 
While pre-training establishes broad capabilities, the post-training focus is shifting from SFT to online RL to bridge the domain gap. Adapting RL depends on these representations, where discrete approaches (VLA-RL~\citep{vlarl}, RL4VLA~\citep{rl4vla}) leverage accessible token probabilities, while continuous mappings (SimpleVLA-RL~\citep{simplevla}) treat outputs as Gaussian means. 
However, flow-based VLAs face the challenge of intractable likelihoods due to multi-step ODE sampling. Some methods bypass likelihood calculation entirely: GR-RL~\citep{grrl} distills value functions in the latent space, while $\pi^{*}_{0.6}$ utilizes preference-based feedback. Conversely, $\pi_\texttt{RL}$~\citep{pirl} addresses this by transforming the deterministic ODE into an SDE or adding auxiliary noise networks. Crucially, this noise injection serves a dual purpose: it not only facilitates mathematical likelihood approximation but also significantly enhances exploration. This importance of noise-induced exploration is further echoed by test-time scaling strategies like TACO~\citep{taco} and Hume~\citep{hume}, as well as DSRL~\citep{dsrl}, which operates RL directly in the diffusion noise space.

\subsection{Policy Optimization for Generative Models}
Integrating online RL into generative models typically follows three paradigms to handle intractable likelihoods. 
Explicit Gradient and Advantage Methods. Approaches like DDPO~\citep{ddpo} and DPOK~\citep{dpok} treat denoising as a sequential decision process. Flow-GRPO~\citep{flowgrpo} and ReinFlow~\citep{reinflow} further facilitate this by converting ODEs to SDEs or using Gaussian approximations to enable policy gradient updates. 
Reward-Weighted Likelihood-Free Methods. To avoid exact likelihood computation, methods such as RWFM~\citep{rwfm, ORW-CFM-W2} and FPO~\citep{fpo} construct proxy objectives or advantage-weighted ratios, effectively optimizing the flow model via regression targets derived from high-reward samples. However, these paradigms often suffer from high variance in gradient estimation or rely on complex reward proxies to stabilize training.
Preference and Contrastive Methods. These approaches align distributions via ranking losses, bypassing explicit advantages. Diffusion-DPO~\citep{diffdpo} aligns models based on trajectory outcomes, while LPO~\citep{lpo} ensures fine-grained consistency at the latent noise-step level. Uniquely, Diffusion-NFT~\citep{diffusionnft} proposes a solver-agnostic framework that constructs implicit positive and negative update directions directly within the forward process, offering a computationally efficient paradigm without requiring explicit likelihoods or value networks.

\newpage
\section{Experiment Details}
\subsection{Detailed Introduction of Benchmarks}
\label{app:Benchmarks}
We evaluate on 2 multitask benchmarks. 
\begin{itemize}[leftmargin=1.2em]
    \item LIBERO~\citep{libero}: We follow the standard protocol across four suites (Spatial, Object, Goal, Long), reporting average success rates over 500 episodes (50 states $\times$ 10 sub-tasks) per suite. The agent receives dual 224$\times$224 RGB inputs, language instructions, and 7-dimensional proprioceptive states (6-DoF joints + gripper). It outputs continuous end-effector actions. The environment provides a sparse binary reward (1 for success, 0 otherwise).
    \item ManiSkill~\citep{maniskill}: We adopt the PutOnPlateInScene multitask setting from RL4VLA~\citep{rl4vla}. This benchmark defines 4,352 compositional tasks derived from 16 objects, 17 receptacles, and 16 tabletop scenes. Observations consist of a single 480$\times$640 third-person view, language instructions, and joint poses. Actions are continuous joint-space commands. The environment provides a composite reward to discourage degenerate throwing behaviors.
\end{itemize}

\begin{table}[h]
\label{tab:maniskill-ood-mapping}
\centering
\caption{OOD task mapping for ManiSkill \texttt{PutOnPlateInScene25*} across Vision, Semantics, and Execution categories.}
\small
\renewcommand{\arraystretch}{1.1} 
\begin{tabular}{lp{6cm}l}
\toprule
\textbf{Category} & \textbf{Sub-category (OOD type)} & \textbf{ManiSkill env IDs} \\
\midrule
\multirow{5}{*}{\textbf{Vision}} 
& Unseen Table (background) 
& \texttt{PutOnPlateInScene25VisionImage-v1} \\
& Dynamic Textures (foreground, weak) 
& \texttt{PutOnPlateInScene25VisionTexture03-v1} \\
& Dynamic Textures (foreground, strong) 
& \texttt{PutOnPlateInScene25VisionTexture05-v1} \\
& Dynamic Noise (image-level, weak) 
& \texttt{PutOnPlateInScene25VisionWhole03-v1} \\
& Dynamic Noise (image-level, strong) 
& \texttt{PutOnPlateInScene25VisionWhole05-v1} \\
\midrule
\multirow{5}{*}{\textbf{Semantics}} 
& Unseen Objects
& \texttt{PutOnPlateInScene25Carrot-v1} \\
& Unseen Receptacles 
& \texttt{PutOnPlateInScene25Plate-v1} \\
& Unseen Instruction Phrasings 
& \texttt{PutOnPlateInScene25Instruct-v1} \\
& Multi-Object 
& \texttt{PutOnPlateInScene25MultiCarrot-v1} \\
& Distractive Receptacle  
& \texttt{PutOnPlateInScene25MultiPlate-v1} \\
\midrule
\multirow{3}{*}{\textbf{Execution}} 
& Unseen Position
& \texttt{PutOnPlateInScene25Position-v1} \\
& Unseen Robot Init Pose 
& \texttt{PutOnPlateInScene25EEPose-v1} \\
& Mid-Episode Object Reposition 
& \texttt{PutOnPlateInScene25PositionChangeTo-v1} \\
\bottomrule
\end{tabular}
\end{table}

\subsection{Hyperparameters for Training}
\label{app:hyperparameters}

\begin{table}[htbp]
\centering
\small
\caption{Hyperparameter settings for Libero and ManiSkill.}
\resizebox{\textwidth}{!}{
    \begin{tabular}{lccccccccccccc} 
    \toprule
    \multirow{4}{*}{\textbf{Parameters}} & \multicolumn{9}{c}{\textbf{LIBERO}} & & \multicolumn{3}{c}{\textbf{ManiSkill}} \\
    \cmidrule{2-10} \cmidrule{12-14}
     & \multicolumn{4}{c}{$\pi_0$} & & \multicolumn{4}{c}{$\pi_{0.5}$} & & $\pi_0$ & & $\pi_{0.5}$ \\
    \cmidrule{2-5} \cmidrule{7-10} \cmidrule{12-12} \cmidrule{14-14}
     & \textbf{Spatial} & \textbf{Object} & \textbf{Goal} & \textbf{Long} & & \textbf{Spatial} & \textbf{Object} & \textbf{Goal} & \textbf{Long} & & \textbf{Multitask} & & \textbf{Multitask} \\
    \midrule
    Train epochs & 400 & 400 & 400 & 400 & & 400 & 400 & 400 & 400 & & 240 & & 240 \\
    Batch size & 2048 & 2048 & 2048 & 2048 & & 2048 & 2048 & 2048 & 2048 & & 5120 & & 5120 \\
    Update epochs & 2 & 2 & 4 & 4 & & 1 & 1 & 3 & 4 & & 5 & & 5 \\
    Actor lr & 1e-5 & 1e-5 & 1e-5 & 1e-5 & & 1e-5 & 1e-5 & 1e-5 & 1e-5 & & 8e-6 & & 8e-6 \\
    \midrule
    Interaction steps & 240 & 240 & 320 & 480 & & 240 & 240 & 320 & 480 & & 60 & & 60 \\
    Parallel environments & 64 & 64 & 64 & 64 & & 64 & 64 & 64 & 64 & & 64 & & 64 \\
    Rollout epochs & 8 & 8 & 8 & 8 & & 8 & 8 & 8 & 8 & & -- & & -- \\
    \midrule
    Action chunk $H$ & 5 & 5 & 5 & 10 & & 5 & 5 & 5 & 10 & & 5 & & 5 \\
    Denoise steps & 4 & 4 & 4 & 4 & & 4 & 4 & 4 & 4 & & 4 & & 4 \\
    Noise level $\sigma$ & 0.2 & 0.2 & 0.2 & 0.2 & & 0.2 & 0.2 & 0.2 & 0.2 & & 0.2 & & 0.2 \\
    Trust Region Size $\beta$ & 1.0 & 1.0 & 1.0 & 1.0 & & 1.0 & 1.0 & 1.0 & 1.0 & & 1.0 & & 1.0 \\
    Initial Decay $\alpha_0$ & 0.1 & 0.1 & 0.1 & 0.1 & & 0.1 & 0.1 & 0.1 & 0.1 & & 0.1 & & 0.1 \\
    End Decay $\alpha_{-1}$ & 0.995 & 0.995 & 0.995 & 0.995 & & 0.995 & 0.995 & 0.995 & 0.995 & & 0.995 & & 0.995 \\
    \bottomrule
    \end{tabular}
} 
\end{table}

\newpage
\section{Additional Ablation: Step Selection Strategy}
\label{app:step-selection}

\paragraph{Motivation.}
Our step-wise supervision is defined on a single solver transition
$(x_t \rightarrow x_{t^-})$ sampled from a $K$-step Flow-SDE rollout.
In our default implementation, we uniformly sample the solver step index
$j \sim \mathcal{U}\{0,\dots,K-1\}$ at each training iteration and construct
$(x_t, x_{t^-}, t) = (x_{t_j}, x_{t_{j+1}}, t_j)$.
This stochastic step selection exposes the model to transitions at different
noise levels and denoising stages, providing more balanced supervision across
the entire solver trajectory.

\paragraph{Ablation setup.}
We compare the default \textbf{Random Step} strategy against several
\textbf{Fixed Step} variants, where the solver index $j$ is held constant
throughout training. All other configurations (solver, objective, training
budget, and environment settings) remain identical.

\paragraph{Results.}
As shown in Figure~\ref{fig:step-selection}, uniformly random step selection
achieves more stable optimization and improves the final success rate compared
to fixed-step choices. We hypothesize that fixed-step supervision biases
learning toward a narrow noise regime, while random step selection provides
coverage over multiple denoising stages and thus yields more robust policy
learning.

\begin{figure}[h]
    \centering
    \includegraphics[width=0.85\linewidth]{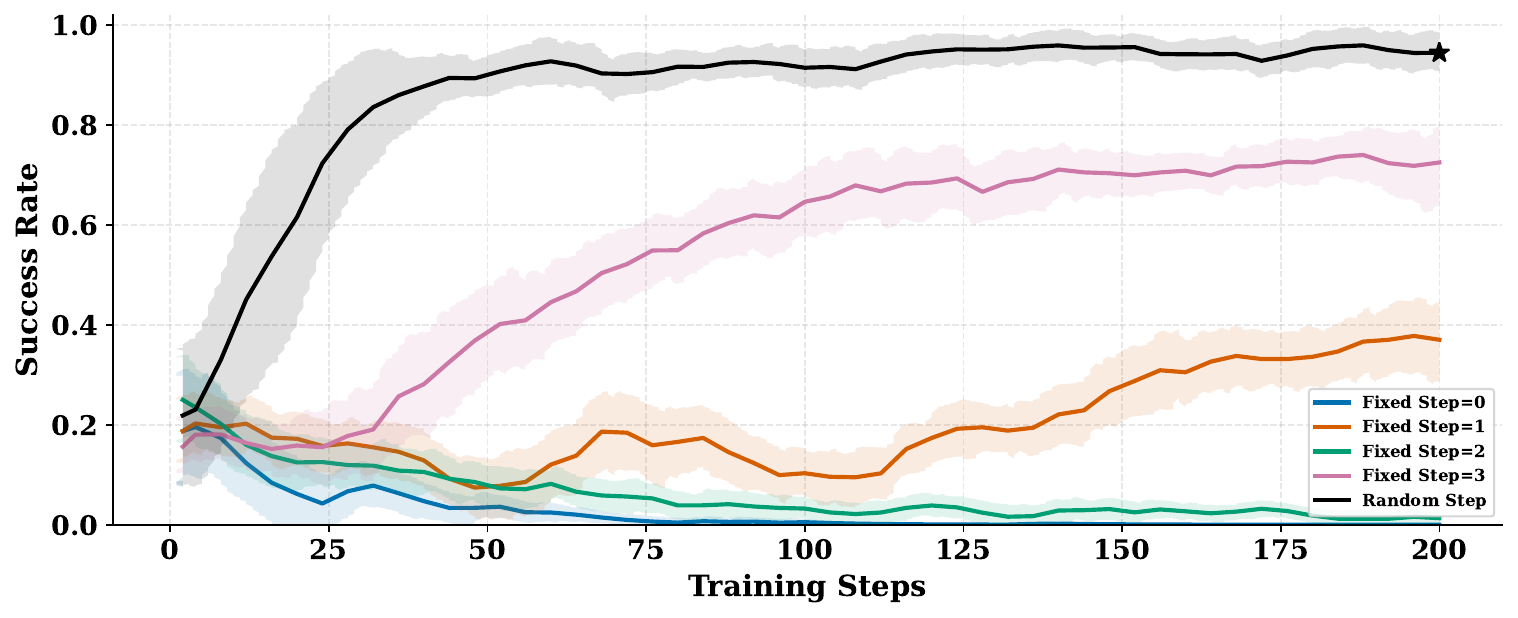}
    \caption{\textbf{Step selection ablation.} Performance comparison between uniform random solver-step sampling and fixed-step selection strategies.}
    \label{fig:step-selection}
\end{figure}




\end{document}
